\documentclass[sigconf,natbib=false]{acmart}
\AtBeginDocument{%
  }

\copyrightyear{2026}
\acmYear{2026}
\setcopyright{cc}
\setcctype{by}
\acmConference[CIKM '26]{Proceedings of the 35th ACM International Conference on Information and Knowledge Management}{November 07--11, 2026}{Rome, Italy}
\acmBooktitle{Proceedings of the 35th ACM International Conference on Information and Knowledge Management (CIKM '26), November 07--11, 2026, Rome, Italy}
\acmDOI{10.1145/3799682.3841029}
\acmISBN{979-8-4007-2539-5/2026/11}

\RequirePackage[
  datamodel=acmdatamodel,
  style=acmnumeric,
  ]{biblatex}

\usepackage{algorithm}
\usepackage{algorithmic}

\usepackage{amssymb} 
\usepackage{multirow}
\usepackage{makecell}
\usepackage{hyperref}       
\usepackage{url}            
\usepackage{booktabs}       
\usepackage{amsfonts}       
\usepackage{nicefrac}       
\usepackage{microtype}      
\usepackage{array} 
\usepackage{amsthm}

\usepackage{amsmath}
\usepackage{enumitem}

\usepackage{tabularx}

\usepackage{thmtools}
\usepackage{graphicx} 
\usepackage{wrapfig}
\usepackage{todonotes}
\usepackage{stfloats} 
\usepackage[table]{xcolor}
\usepackage{colortbl}
\usepackage{comment}

\usepackage{xurl}
\definecolor{myblue}{RGB}{31,119,180}
\newcolumntype{C}{>{\columncolor{myblue!20}}c} 
\newcolumntype{B}{>{\columncolor{myblue}\color{white}}c}

\theoremstyle{definition} 

\newtheorem{theorem}{Theorem}[section]

\newtheorem{lemma}[theorem]{Lemma}
\newtheorem{corollary}[theorem]{Corollary}

\newtheorem{assumption}[theorem]{Assumption}

\newtheorem{rem}{Remark} 

\definecolor{tableZebra}{gray}{0.93}
\definecolor{myBlue}{RGB}{240, 245, 250}

\definecolor{linkLightBlue}{RGB}{0,102,204}

\begin{document}

\title{Kernel-Complexity Edge Sanitization for Training-Free Defense against Structural Graph Attacks}

\author{Yaning Jia}
\email{yaning.jia.gr@dartmouth.edu}
\affiliation{%
  \department{Department of Computer Science}
  \institution{Dartmouth College}
  \city{Hanover}
  \state{New Hampshire}
  \country{USA}
}

\author{Shenyang Deng}
\email{shenyang.deng.gr@dartmouth.edu}
\affiliation{%
  \department{Department of Computer Science}
  \institution{Dartmouth College}
  \city{Hanover}
  \state{New Hampshire}
  \country{USA}
}

\author{Yaoqing Yang}
\email{yaoqing.yang@dartmouth.edu}
\affiliation{%
  \department{Department of Computer Science}
  \institution{Dartmouth College}
  \city{Hanover}
  \state{New Hampshire}
  \country{USA}
}

\author{Chiyu Ma}
\email{chiyu.ma.gr@dartmouth.edu}
\affiliation{%
  \department{Department of Computer Science}
  \institution{Dartmouth College}
  \city{Hanover}
  \state{New Hampshire}
  \country{USA}
}

\author{Wenxuan Xu}
\email{wenxuan.xu.gr@dartmouth.edu}
\affiliation{%
  \department{Department of Computer Science}
  \institution{Dartmouth College}
  \city{Hanover}
  \state{New Hampshire}
  \country{USA}
}

\author{Soroush Vosoughi}
\correspondingauthor
\email{soroush.vosoughi@dartmouth.edu}
\affiliation{%
  \department{Department of Computer Science}
  \institution{Dartmouth College}
  \city{Hanover}
  \state{New Hampshire}
  \country{USA}
}

\renewcommand{\shortauthors}{Yaning Jia et al.}

\begin{abstract}
Graph Neural Networks (GNNs) have achieved remarkable success across diverse applications, yet they remain highly vulnerable to adversarial attacks that maliciously perturb graph structure. Existing defenses often lack rigorous theoretical grounding, rely on attack-specific heuristics, or require costly retraining procedures such as adversarial training. To address these limitations, we propose Kernel-Complexity Edge Sanitization (KCES), a training-free and model-agnostic framework for defending against structural attacks. KCES is built upon Graph Kernel Complexity (GKC), a principled metric derived from the graph Gram matrix that appears in a generalization upper bound on the GNN test error. From this bound, we define an edge-specific KC score that quantifies each edge’s structural influence via its induced change in GKC. KCES then identifies and prunes high-KC edges, which are empirically enriched with adversarial perturbations under structural attacks, to mitigate their harmful impact. Computationally efficient and scalable, KCES operates as a lightweight preprocessing step without retraining and can be seamlessly integrated with existing defenses. Extensive experiments demonstrate that KCES consistently outperforms representative robust baselines across diverse attack settings and scales effectively to large graphs. Supported by theoretical analysis and extensive empirical validation, KCES provides a principled and efficient framework for securing~GNNs. Our code is available at {\color{linkLightBlue}\url{https://github.com/karpning/KCScore}}.
\end{abstract}



\begin{CCSXML}
<ccs2012>
   <concept>
       <concept_id>10002951.10003227.10003351</concept_id>
       <concept_desc>Information systems~Data mining</concept_desc>
       <concept_significance>300</concept_significance>
       </concept>
   <concept>
       <concept_id>10003752.10010070</concept_id>
       <concept_desc>Theory of computation~Theory and algorithms for application domains</concept_desc>
       <concept_significance>500</concept_significance>
       </concept>
   <concept>
       <concept_id>10010147.10010178</concept_id>
       <concept_desc>Computing methodologies~Artificial intelligence</concept_desc>
       <concept_significance>500</concept_significance>
       </concept>
 </ccs2012>
\end{CCSXML}

\ccsdesc[300]{Information systems~Data mining}
\ccsdesc[500]{Theory of computation~Theory and algorithms for application domains}
\ccsdesc[500]{Computing methodologies~Artificial intelligence}

\keywords{Graph Neural Networks, Graph Adversarial Defense, Kernel Complexity}

\maketitle

\section{Introduction}
\label{intro}

\label{sec:intro}
Graph Neural Networks (GNNs) have achieved remarkable success in modeling graph-structured data across diverse domains, including social analysis \cite{perozzi2014deepwalk, grover2016node2vec}, recommendation~\cite{ying2018graph, he2020lightgcn}, and drug discovery \cite{gilmer2017neural, yang2019analyzing}. Despite their effectiveness, GNNs are highly vulnerable to structural perturbations: even the addition or removal of a few edges can drastically degrade performance. Such attacks exploit the message-passing mechanism by injecting spurious connections or disrupting informative neighborhoods, thereby corrupting the aggregated information. Early studies, including Nettack and Metattack~\cite{zugner2018adversarial,xu2018powerful}, first revealed this weakness, and subsequent topology-only attacks further amplified the threat~\cite{bojchevski2019adversarial,geisler2021robustness, xu2019topology, wang2022bandits}. As these attacks require no feature manipulation and operate under limited perturbation budgets, purely structural attacks represent a realistic and severe threat surface for deployed GNN systems.


To mitigate structural vulnerabilities, a variety of defense strategies have been proposed. Graph purification methods, such as GNN-Jaccard~\cite{wu2019adversarial}, GNN-SVD~\cite{entezari2020all}, and NoiseGNN~\cite{ennadir2024simple}, aim to remove suspicious edges to restore a cleaner topology. Adversarial training approaches, including RGCN~\cite{zhu2019robust}, improve robustness by training on perturbed graphs. Architecture- or reconstruction-based defenses, such as ProGNN~\cite{jin2020graph}, AirGNN~\cite{liu2021graph}, and GPR-GAE~\cite{lee2025self}, incorporate adaptive graph learning or joint structure refinement into the model design. Beyond these approaches, recent work has also explored theoretically grounded stability mechanisms based on Lipschitz analysis and regularization to control GNN output sensitivity under perturbations and biased inputs~\cite{jia2023enhancing,jia2024aligning,jia2023stabilizing}. Despite promising empirical performance, existing defenses exhibit several limitations. Many defenses are motivated by local structural heuristics, such as feature similarity, low-rank smoothness, or learned edge weights, but their pruning criteria are not explicitly tied to a generalization objective. This makes it difficult to interpret why a removed edge should improve the expected behavior of downstream GNNs. Optimization-based methods often introduce substantial computational overhead, limiting their applicability to large-scale graphs, while training-based defenses can become specialized to particular perturbation patterns such as Nettack~\cite{zugner2018adversarial}. These limitations highlight the need for a theoretically grounded, training-free approach that is both robust and scalable across diverse architectures.

\begin{figure}[t]
  \centering
  \includegraphics[width=1.0\linewidth]{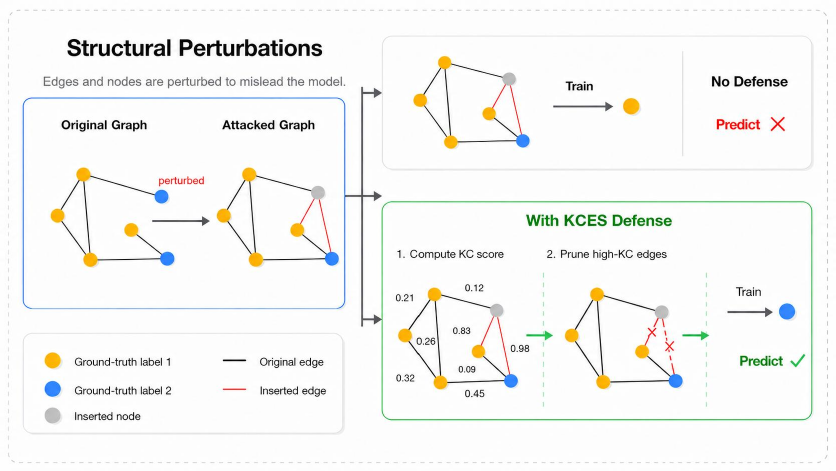}
  \vspace{-0.5em}
  \caption{Defense Framework against Structural Perturbations with KCES. Adversarial attacks tend to increase the KC scores of perturbed edges; KCES enhances robustness by pruning edges with high KC scores.}
  \label{fig-1:kces_illustration}
\end{figure}

Motivated by these limitations and inspired by advances in data-independent generalization~\cite{arora2019fine,nohyun2022data}, we propose \textbf{Kernel-Complexity Edge Sanitization (KCES)}, a principled, training-free, and model-agnostic framework for defending against structural perturbations. KCES introduces \textbf{Graph Kernel Complexity (GKC)} (Section~\ref{sub-GKC}), a graph-induced complexity metric that appears in the test error bound of GNNs. Building on this formulation, we define the \textbf{KC score} (Section~\ref{KCscoredef}) of an edge as the change in GKC induced by its removal. This score measures the magnitude of an edge's influence on the GKC term in the generalization bound, yielding a generalization-guided pruning signal beyond local similarity or reconstruction heuristics. Since KC scores depend only on the graph-feature pair and pseudo-labels, KCES requires no retraining or architectural modification and can be integrated with diverse GNN architectures. As illustrated in Figure~\ref{fig-1:kces_illustration} and validated experimentally (Section~\ref{subsec:mechanism}), structural attacks enrich harmful perturbations in the high-KC region; by selectively pruning these high-KC edges, KCES improves GNN robustness in a scalable and training-free manner. Our contributions are summarized as follows:
\begin{itemize}[leftmargin=0.8em]
\item \textbf{A generalization-theoretic framework for structural defense.}
We introduce Graph Kernel Complexity (GKC), a graph-induced complexity measure that appears in a GNN generalization bound, and derive the KC score as a generalization-guided measure of each edge's structural influence.

\item \textbf{A training-free edge sanitization mechanism.}
We propose Kernel-Complexity Edge Sanitization (KCES), a preprocessing strategy that removes high-KC edges empirically enriched with harmful perturbations under structural attacks, without adversarial training or iterative graph optimization.

\item \textbf{Plug-and-play and scalable design.}
KCES functions as a model-agnostic preprocessing module that can be seamlessly integrated with diverse GNN architectures and existing defense pipelines. Since KC scores depend only on the graph-feature pair and pseudo-labels, KCES requires no retraining or architectural modification and remains efficient on large-scale graphs.

\item \textbf{Comprehensive empirical validation.}
Extensive experiments across multiple attacks and graph scales show that KCES consistently improves structural robustness over representative robust baselines.
\end{itemize}

\vspace{-0.5em}
\section{Preliminaries}
\label{sec:pre}

We denote $\mathbb{R}^d$ as the $d$-dimensional Euclidean space. The norm $\|\cdot\|_p$ represents the vector $p$-norm and its induced operator norm for matrices, while $\|\cdot\|_F$ denotes the Frobenius norm. The notation $\|\cdot\|_{V_L,p}$ refers to the $p$-operator norm restricted to a subspace $V_L \subset \mathbb{R}^d$. The shorthand $[n]=\{1,2,\ldots,n\}$ denotes the index set. For a distribution $\mathcal{D}$, $\mathbb{E}_{X\sim\mathcal{D}}[\cdot]$ denotes expectation with respect to $X$. We represent a graph as $G=(X,\tilde A,\tilde D,\mathbf y)$, where $X\in\mathbb{R}^{N\times F}$ is the node feature matrix, $\tilde A\in\mathbb{R}^{N\times N}$ is the adjacency matrix with self-loops, $\tilde D_{ii}=\sum_j \tilde A_{ij}$ is the degree matrix, and $\mathbf y\in\mathcal Y^N$ denotes the label vector. The aggregated feature matrix is $Z=\tilde D^{-1/2}\tilde A \tilde D^{-1/2} X$, and its row-normalized version is denoted by $\tilde X$ in the Gram matrix. The training subgraph is denoted by $G_{\mathrm{train}}$. The first-layer weight matrix is $W\in\mathbb{R}^{F\times m}$ with columns $W_r\in\mathbb{R}^F$, and the second-layer weight vector is $\mathbf a\in\mathbb{R}^m$. For a matrix $K\in\mathbb{R}^{N\times N}$, $K_{ij}$ denotes its $(i,j)$-th entry. A Gaussian distribution with mean $\mu$ and covariance $\Sigma$ is written as $\mathcal N(\mu,\Sigma)$.

\subsection{Theoretical Setup for GNNs}
\label{gnn-definition}
We analyze a GNN as a kernel model on an undirected graph with \(N\) nodes. The adjacency matrix $\tilde A\in\{0,1\}^{N\times N}$ is fixed, symmetric, and includes self-loops ($\tilde A_{ii}=1$), with degree matrix $\tilde D_{ii}=\sum_j \tilde A_{ij}$. The data distribution $\mathcal D_G$ over $\mathbb R^{N\times F}\times \mathbb R^N$ generates node features and labels $G=(X,\tilde A,\tilde D,\mathbf y)$, where each feature-label pair $(X_i,y_i)\in\mathbb R^F\times\mathbb R$ is drawn independently for analytical tractability. A subset of nodes and labels, denoted by $G_{\mathrm{train}}$, is used for training. Let $Z=\tilde D^{-\frac12}\tilde A\tilde D^{-\frac12}X$ denote the aggregated feature matrix, and let $\tilde X$ denote its row-normalized version. The forward propagation for node $i$ is
\vspace{-0.5em}
\begin{equation}
\label{GCN}
f_{\text{GNN}}(X_i, \tilde{A}, \tilde{D}) = \frac{1}{\sqrt{m}}
\sum_{r=1}^m
a_r \,
\sigma\!\left(W_r^\top \tilde X_i\right).
\end{equation}
Here, $m$ is the number of hidden units; $W\in\mathbb R^{F\times m}$ is the first-layer weight matrix with columns $W_r\in\mathbb R^F$; $\mathbf a=(a_1,\dots,a_m)^\top\in\mathbb R^m$ is the second-layer weight vector; and $\sigma(\cdot)$ denotes the activation function (e.g., ReLU).

For the GNN \(f_{\text{GNN}}(X_i, \tilde{A}, \tilde{D})\), we define the training error (empirical risk) on \(G_{\text{train}}\) as:
\begin{equation}
\label{eq-trainloss}
L(W) = \frac{1}{2} \sum_{i=1}^{N} \left( y_i - f_{\text{GNN}}(X_i, \tilde{A}, \tilde{D}) \right)^2,
\end{equation}
Here, $X_i$ and $y_i$ are sampled from the training subset $G_{\mathrm{train}}$. 
The corresponding test error (expected risk) is defined as the expectation of the empirical risk over the data distribution~$\mathcal D_G$:
\begin{equation}
\label{eq-testloss}
L_{\mathcal D_G}(W)
=
\mathbb{E}_{(X,\mathbf y)\sim \mathcal D_G}
\big[ L(W) \big].
\end{equation}

\subsection{Graph Kernel Gram Matrix}
\label{gram-matrix}

We introduce the \textbf{Graph Kernel Gram Matrix} (hereafter, the \textbf{Gram matrix}), which characterizes pairwise node interactions induced by graph structure and node features. This construction builds upon classical kernel methods~\cite{xie2017diverse, arora2019fine, tsuchida2018invariance, du2018gradient}, where Gram matrices serve as a central tool for relating model complexity to generalization performance. In our GNN setting, \textit{the graph aggregation operation is incorporated directly into the kernel}. Specifically, for a graph \(G=(X,\tilde A,\tilde D,\mathbf y)\), let \(Z=\tilde D^{-\frac12}\tilde A\tilde D^{-\frac12}X\in\mathbb R^{N\times F}\) denote the aggregated feature matrix, and define the row-normalized features \(\tilde X_i=Z_i/\|Z_i\|_2\) before kernel computation so that \(\|\tilde X_i\|_2=1\) for each node \(i\). We then define the Gram matrix \(H^\infty=[H_{ij}^\infty]_{i,j=1}^N\in\mathbb R^{N\times N}\), where each entry \(H_{ij}^\infty\) is given by:
\vspace{-0.1em}
\begin{align}
\label{gkgm}
    H_{ij}^{\infty} = \frac{\tilde{X}_i^{\top} \tilde{X}_j \left(\pi - \arccos \left(\tilde{X}_i^{\top} \tilde{X}_j\right)\right)}{2 \pi}.
\end{align}
Here, \(\tilde X_i \in \mathbb{R}^{1\times F}\) and \(\tilde X_j \in \mathbb{R}^{1\times F}\) denote the \(i\)-th and \(j\)-th rows of \(\tilde X\), respectively. The resulting matrix \(H^\infty\) induces a kernel feature space over graph nodes, which serves as the foundation for defining the Graph Kernel Complexity in the next section.

\subsection{Graph Kernel Complexity}
\label{sub-GKC}
Based on the above Gram matrix, we define the \textbf{Graph Kernel Complexity (GKC)} to characterize the test error bound of GNNs, reflecting their generalization capacity. Formally, given the Gram matrix \(H^\infty \in \mathbb{R}^{N \times N}\) and a bounded label vector \(\mathbf y \in \mathbb{R}^N\), the GKC is defined as
\begin{equation}
\label{eq-GKC}
    \mathrm{GKC}(H^\infty,\mathbf y)
    =
    \frac{2\mathbf y^\top (H^\infty)^{-1}\mathbf y}{N}.
\end{equation}
In the theoretical analysis, \(\mathbf y\) denotes the ground-truth label vector. In KCES, we replace \(\mathbf y\) with the pseudo-label vector \(\hat{\mathbf y}\), obtained by mapping unsupervised clustering assignments to bounded numerical codes.

\section{GKC-based Generalization Analysis}
\label{sec:gkc_analysis}
This section employs the Gram matrix and GKC to establish generalization bounds for the GNN defined in Section~\ref{gnn-definition}. Theorems for training and test errors are presented informally for clarity, with formal statements and proofs in the Appendix~\ref{sec:theory-gkm}.

First, Theorem~\ref{thm:train-error} characterizes the training error dynamics as follows:
\begin{theorem}[\textbf{Training error dynamics}]
\label{thm:train-error}
Under Assumption~\ref{assum-e-2}, after $t$ gradient descent updates with step size $\eta$, the training error satisfies
\begin{equation}
\label{thm:train-error-eq}
L(W_t) = \left\| \left( I - \eta H^\infty \right)^t \mathbf{y} \right\|_2^2 + \varepsilon,
\end{equation}
where $H^\infty$ denotes the Gram matrix, $m$ is the hidden layer width, and $\varepsilon = \tilde{O}(m^{-1/2})$ represents an error term dependent on $t$ and $m$. The formula shows that the training error decays exponentially with the number of gradient steps, with the rate of decay governed by the Gram matrix $H^\infty$. This behavior underlies the generalization bound stated in Theorem~\ref{thm:test-error}. A formal version of the result is provided in Appendix~\ref{subsec:proof-training}~(Theorem~\ref{train_theo_fo}).
\end{theorem}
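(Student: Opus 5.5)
The plan is to transfer the Du et al./Arora et al. analysis of overparameterized two-layer ReLU networks to the GNN of Eq.~\eqref{GCN}. The key observation is that, for fixed $\tilde A$, the network acts on the unit-norm aggregated features $\tilde X_i$ exactly as a two-layer network acts on ordinary inputs. So the kernel machinery applies with $x_i$ replaced by $\tilde X_i$.

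\textbf{Setup.} I will work under Assumption~\ref{assum-e-2}, taken to mean:
\begin{itemize}[leftmargin=1.2em]
\item random initialization $W_r(0)\sim\mathcal N(0,\kappa^2 I)$, $a_r\sim\mathrm{Unif}\{\pm1\}$, with $\mathbf a$ held fixed;
\item $\lambda_0:=\lambda_{\min}(H^\infty)>0$, which holds when no two $\tilde X_i$ are parallel;
\item width $m=\mathrm{poly}(N,1/\lambda_0,1/\delta)$ and step size $\eta=O(\lambda_0/N^2)$.
\end{itemize}
Write $\mathbf u(t)$ for the vector of predictions and define the time-$t$ Gram matrix
\[
H_{ij}(t)=\tfrac1m\tilde X_i^\top\tilde X_j\sum_r \mathbb 1\{W_r(t)^\top\tilde X_i\ge0,\ W_r(t)^\top\tilde X_j\ge0\}.
\]
Its expectation at initialization is exactly the arc-cosine kernel in Eq.~\eqref{gkgm}, because $\Pr[w^\top x\ge0,\,w^\top y\ge0]=(\pi-\arccos(x^\top y))/(2\pi)$ for unit vectors $x,y$.

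\textbf{Key steps.}
\begin{enumerate}[leftmargin=1.2em]
\item \emph{Concentration at initialization.} By Hoeffding's inequality and a union bound, $\|H(0)-H^\infty\|_2=\tilde O(N/\sqrt m)$, so $\lambda_{\min}(H(0))\ge 3\lambda_0/4$.
\item \emph{Stability of activation patterns.} Let $R=O(\sqrt N\|\mathbf y-\mathbf u(0)\|_2/(\sqrt m\lambda_0))$. If every $\|W_r-W_r(0)\|_2\le R$, only an $O(mR/\kappa)$ fraction of neurons near each hyperplane can flip sign, and by Gaussian anti-concentration this gives $\|H(t)-H(0)\|_2=O(N R/\kappa)$.
\item \emph{Linearization of one GD step.} I will show
\[
\mathbf y-\mathbf u(t+1)=(I-\eta H(t))(\mathbf y-\mathbf u(t))+\mathbf e(t),
\]
where $\mathbf e(t)$ collects the flipping neurons and the second-order terms and is $O(\eta^2 N^2+\eta N R/\kappa)\|\mathbf y-\mathbf u(t)\|_2$.
\item \emph{Induction.} Steps~2--3 together give geometric decay of the residual. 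The decay keeps the weights inside the radius-$R$ ball, which closes the induction.
\item \emph{Replacing $H(t)$ by $H^\infty$.} Unroll the recursion with $H^\infty$ in place of $H(t)$. This leaves
\[
\mathbf y-\mathbf u(t)=(I-\eta H^\infty)^t(\mathbf y-\mathbf u(0))+\sum_s (I-\eta H^\infty)^{t-1-s}\boldsymbol\xi(s),
\]
with $\|\boldsymbol\xi(s)\|_2=\tilde O(m^{-1/2})\,\mathrm{poly}(N,1/\lambda_0)$ times the residual.
\item \emph{Small initial output.} With $\kappa$ small, $\|\mathbf u(0)\|_2=O(\kappa\sqrt N)$. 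This lets me replace $\mathbf y-\mathbf u(0)$ by $\mathbf y$.
\end{enumerate}

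\textbf{Conclusion.} Squaring the expression from step~5 shows that $L(W_t)$ equals $\|(I-\eta H^\infty)^t\mathbf y\|_2^2$ plus a cross term and a squared remainder. Since $\|I-\eta H^\infty\|_2\le 1-\eta\lambda_0$, the geometric sum is bounded by $1/(\eta\lambda_0)$, so these extra terms together form an $\varepsilon=\tilde O(m^{-1/2})$ whose constant depends on $t$, $N$, $\lambda_0$ and $\delta$. This yields Eq.~\eqref{thm:train-error-eq}. The factor $\frac12$ in Eq.~\eqref{eq-trainloss} is absorbed into the normalization, or $\eta$ is rescaled.

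\textbf{Main obstacle.} The hard part is step~5: controlling $\|H(t)-H^\infty\|_2$ uniformly over all $t$ while still obtaining an $m^{-1/2}$ rate. For this I would follow Arora et al.'s argument, bounding $\|H(t)-H^\infty\|_2\le\|H(0)-H^\infty\|_2+\|H(t)-H(0)\|_2$ and choosing $\kappa$ and $m$ so that both terms are $\tilde O(m^{-1/2})$ up to polynomial factors.

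One graph-specific check is needed. The aggregation makes the $\tilde X_i$ dependent across nodes, but that does not matter here. The argument only uses the randomness of the weights $W$, so it holds for the fixed, given set of inputs $\tilde X_i$.
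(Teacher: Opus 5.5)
\textbf{Overall route.} Your approach is the paper's approach. The paper proves Theorem~\ref{train_theo_fo} in one line: it instantiates Arora et al.'s training theorem (Lemma~\ref{lem:ref-training}) with $z_i\leftarrow\tilde X_i$, $\mathbf s\leftarrow\mathbf y$, $H^{\mathrm{ref}}\leftarrow H^\infty$. Your steps~1--6 unpack that black box. Your observation that only the randomness of $W$ is used is exactly what licenses the substitution, even though the $\tilde X_i$ are dependent. Proving the identity for $\|\mathbf y-\mathbf u(t)\|_2^2=2L(W_t)$ is the right way to handle the factor $\frac12$; the paper's formal version is itself loose here, equating $L(W_t)$ with the unsquared norm.

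\textbf{The gap: the claimed rate $\varepsilon=\tilde O(m^{-1/2})$.}
\begin{itemize}
\item In step~5 you bound $\|\boldsymbol\xi(s)\|_2$ by $\tilde O(m^{-1/2})\,\mathrm{poly}(N,1/\lambda_0)$ times the residual. But by your own step~2, its dominant part is $\eta\|H(s)-H(0)\|_2\|\mathbf y-\mathbf u(s)\|_2$ with $\|H(s)-H(0)\|_2=O(NR/\kappa)$, and $R\propto m^{-1/2}$ is essentially independent of $\kappa$. So the unrolled remainder is really $\mathrm{poly}(N,1/\lambda_0)\cdot\tilde O(1/(\kappa\sqrt m))$.
\item Step~6 pays $\|\mathbf u(0)\|_2=\Theta(\kappa\sqrt N)$, and this does not shrink with $m$, because $\mathrm{Var}(u_i(0))=\kappa^2/2$ at every width.
\end{itemize}
The total error is therefore of order $\kappa\sqrt N+\mathrm{poly}(N,1/\lambda_0)/(\kappa\sqrt m)$. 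For $\kappa=\Theta(1)$ it does not vanish: already at $t=0$, $2L(W_0)-\|\mathbf y\|_2^2=-2\mathbf y^\top\mathbf u(0)+\|\mathbf u(0)\|_2^2$ does not decay with $m$. The optimal choice $\kappa\asymp m^{-1/4}$ gives only $\tilde O(m^{-1/4})$.

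So the resolution in your ``main obstacle'' paragraph (choosing $\kappa$ and $m$ so that everything is $\tilde O(m^{-1/2})$) cannot be carried out under Assumption~\ref{assum-e-2}: steps~2 and~6 pull $\kappa$ in opposite directions. This is exactly the coupling in Lemma~\ref{lem:ref-training}: $\kappa=O(\epsilon\delta/\sqrt N)$ and $m=\Omega(N^7/(\lambda_0^4\kappa^2\delta^4\epsilon^2))$, i.e.\ $m\propto\epsilon^{-4}$.

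What your argument actually proves is the $\pm\epsilon$ form of Theorem~\ref{train_theo_fo}: any $\epsilon$, once $m$ is large enough. A true $m^{-1/2}$ rate would need one of two changes:
\begin{itemize}
\item force $\mathbf u(0)=0$, e.g.\ by a symmetrized initialization $a_{r+m/2}=-a_r$, $W_{r+m/2}(0)=W_r(0)$ with $\kappa=\Theta(1)$, which Assumption~\ref{assum-e-2} does not allow; or
\item keep $(I-\eta H^\infty)^t(\mathbf y-\mathbf u(0))$ as the leading term instead of replacing it by $(I-\eta H^\infty)^t\mathbf y$.
\end{itemize}

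\textbf{A related fix in step~3.} Drop the $\eta^2N^2$ term from $\mathbf e(t)$. The output recursion is exactly linear for neurons whose activation pattern does not flip; that term belongs to the squared-loss recursion used in step~4. If it stayed in $\boldsymbol\xi(s)$, unrolling would leave an $m$-independent remainder of order $\eta N^{5/2}/\lambda_0$.
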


Building on the training error analysis, Theorem~\ref{thm:test-error} establishes a generalization bound in terms of GKC. 

\begin{theorem}[\textbf{Test error bound}]
\label{thm:test-error}
Under Assumption~\ref{assum-e-2}, for sufficiently large hidden layer width $m$ and iteration count $t$, the following holds with probability at least $1 - \delta$:
\begin{equation}
\label{thm:test-error-eq}
L_{\mathcal{D}_G}(W_t)
\;\le\;
\sqrt{\mathrm{GKC}(H^{\infty}, \mathbf y)}
\;+\;
O\left(\sqrt{\frac{\log \frac{N}{\lambda_0 \delta}}{N}}\right),
\end{equation}
Here, $\mathrm{GKC}(H^{\infty}, \mathbf y)$ denotes the theoretical Graph Kernel Complexity computed with the ground-truth label vector $\mathbf y$ (Section~\ref{sub-GKC}), $\delta \in (0, 1)$ is the confidence level, $N$ is the number of nodes, and $\lambda_0$ is a lower bound on $\lambda_{\min}(H^{\infty})$ under Assumption~\ref{assum-e-2}~(Appendix~\ref{subsec:theory-assumptions}). The generalization bound on the GNN test error is primarily influenced by the data-dependent GKC term. A smaller GKC leads to a tighter bound, indicating improved generalization under standard GNN training regimes. The formal statement and proof are presented in Appendix~\ref{subsec:proof-test} (Theorem~\ref{test_theo_fo}).
\end{theorem}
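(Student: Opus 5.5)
We follow the data-dependent generalization argument of Arora et al.~\cite{arora2019fine}, transferred to the aggregated, row-normalized features $\tilde X_i$. The GNN in \eqref{GCN} is a two-layer ReLU network applied to the inputs $\tilde X_i$ with $\|\tilde X_i\|_2=1$. Its neural tangent Gram matrix is therefore exactly $H^\infty$ in \eqref{gkgm}, and Assumption~\ref{assum-e-2} supplies $\lambda_{\min}(H^\infty)\ge\lambda_0>0$. The plan has three steps: bound how far the weights travel during training, turn that into a Rademacher-complexity bound for the class of reachable networks, and finish with a standard uniform-convergence inequality for a bounded Lipschitz loss.

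\textbf{Step 1: weight displacement.} Starting from the linearized dynamics behind Theorem~\ref{thm:train-error}, we write $\mathrm{vec}(W_t)-\mathrm{vec}(W_0)=\sum_{s<t}\eta Z_s(\mathbf y-\mathbf u_s)$. Here $\mathbf u_s$ is the vector of network outputs and $Z_s$ is the feature-gradient matrix, with $Z_s^\top Z_s\approx H^\infty$. Substituting $\mathbf y-\mathbf u_s\approx(I-\eta H^\infty)^s\mathbf y$ and summing the geometric series gives
\[
\|W_t-W_0\|_F\le\sqrt{\mathbf y^\top (H^\infty)^{-1}\mathbf y}+\tilde O\!\left(\tfrac{N}{\lambda_0\sqrt{m}}\right).
\]
We also record the per-neuron bound $\|W_{r,t}-W_{r,0}\|_2\le O\!\left(\tfrac{\sqrt N\|\mathbf y\|_2}{\sqrt m\lambda_0}\right)$. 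Both bounds hold with probability $1-\delta$ over the initialization, once $m$ is polynomially large in $N,1/\lambda_0,1/\delta$.

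\textbf{Step 2: Rademacher complexity.} Consider the class $\mathcal F_{R,B}$ of networks $f_{W,\mathbf a}$ whose weights satisfy $\|W_r-W_{r,0}\|\le R$ for each neuron and $\|W-W_0\|_F\le B$. Its empirical Rademacher complexity on the $N$ nodes is at most $\frac{B}{\sqrt{2N}}\big(1+O(\cdot)\big)$ plus lower-order terms in $R$ and $m$. To prove this, we linearize each network around $W_0$, apply Cauchy--Schwarz to the resulting linear class, and control the fraction of neurons whose ReLU activation pattern flips; that fraction is small because $R$ is small. Inserting the value of $B$ from Step 1 turns the leading term into $\sqrt{2\mathbf y^\top(H^\infty)^{-1}\mathbf y/N}=\sqrt{\mathrm{GKC}(H^\infty,\mathbf y)}$, matching \eqref{eq-GKC}.

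\textbf{Step 3: uniform convergence.} We apply a standard Rademacher bound for a $1$-Lipschitz loss truncated to $[0,1]$. This bounds the test risk by the empirical risk, plus $2\mathcal R$, plus $O(\sqrt{\log(1/\delta)/N})$. By Theorem~\ref{thm:train-error}, the empirical risk is small for large $t$. A union bound over the events from Steps 1 and 2, together with a discretization of the unknown constant $B$, produces the $\log\frac{N}{\lambda_0\delta}$ term in \eqref{thm:test-error-eq}.

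\textbf{Main obstacle.} The delicate point is the i.i.d.\ structure. After aggregation by $\tilde D^{-1/2}\tilde A\tilde D^{-1/2}$, the inputs $\tilde X_i$ are no longer independent across nodes, even though the raw pairs $(X_i,y_i)$ are. I would first try to condition on the fixed graph $\tilde A$ and treat the aggregated node samples as the data for a transductive-style bound, as the setup in Section~\ref{gnn-definition} implicitly does. If this fails, I would invoke the paper's assumption in a form that directly posits independence of the pairs $(\tilde X_i,y_i)$. Keeping the Step 2 linearization errors at order $\tilde O(m^{-1/2})$, so that they vanish as $m$ grows, is routine.
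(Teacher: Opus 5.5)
Your plan is the paper's argument unpacked by one level. The paper's proof applies Arora et al.'s generalization theorem (Lemma~\ref{lem:ref-test}) as a black box through the correspondence $n\leftarrow N$, $z_i\leftarrow\tilde X_i$, $\mathbf s\leftarrow\mathbf y$, $H^{\mathrm{ref}}\leftarrow H^\infty$; your Steps 1--3 re-derive that theorem instead. In doing so you drop the ingredient that makes the leading term equal $\sqrt{\mathrm{GKC}(H^\infty,\mathbf y)}$, namely a small initialization scale $\kappa$.

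In Step 1 you substitute $\mathbf y-\mathbf u_s\approx(I-\eta H^\infty)^s\mathbf y$, but the linearized recursion gives $(I-\eta H^\infty)^s(\mathbf y-\mathbf u_0)$. The initial outputs $u_{0,i}=\frac{1}{\sqrt m}\sum_r a_r\sigma(W_{r,0}^\top\tilde X_i)$ have variance $\kappa^2/2$ regardless of $m$. The form in Theorem~\ref{thm:train-error} hides this only because its formal version, Theorem~\ref{train_theo_fo}, takes $\kappa=O(\epsilon\delta/\sqrt N)$. The leading displacement term is therefore $\sqrt{(\mathbf y-\mathbf u_0)^\top(H^\infty)^{-1}(\mathbf y-\mathbf u_0)}$, and its distance from $\sqrt{\mathbf y^\top(H^\infty)^{-1}\mathbf y}$ is controlled only by $\|\mathbf u_0\|_2/\sqrt{\lambda_0}\approx\kappa\sqrt{N/(2\lambda_0)}$. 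After the $2/\sqrt{2N}$ scaling this leaves an additive term of order $\kappa/\sqrt{\lambda_0}$ that no choice of $m$ or $t$ removes. Since Assumption~\ref{assum-e-2} allows any $\kappa\le 1$, your Step 1 displacement bound does not hold for $\kappa$ of constant order.

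Step 2 has the same dependence. The fraction of neurons whose activation can flip is $O(R/\kappa)$, because $W_{r,0}^\top\tilde X_i\sim\mathcal N(0,\kappa^2)$. It is small because $R/\kappa$ is small, not because $R$ is, and this is the source of the $R^2\sqrt m/\kappa$ term in the Rademacher bound. Both points are repaired by the parameter regime of Theorem~\ref{test_theo_fo}, which your sketch never uses. Taking $\kappa=O(\lambda_0\delta/N)$ makes the $\mathbf u_0$ contribution at most $O(1)$ in $\|W_t-W_0\|_F$, hence $O(N^{-1/2})$ in the final bound. Taking $m\ge\kappa^{-2}\mathrm{poly}(N,\lambda_0^{-1},\delta^{-1})$ makes $R/\kappa$ small.

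The independence obstacle you flag is real, and the paper does not resolve it either. Its proof asserts that Assumption~\ref{assum-e-2} supplies every hypothesis of Lemma~\ref{lem:ref-test}. That lemma, however, requires the training pairs to be i.i.d.\ draws from one distribution, and Assumption~\ref{assum-e-2} says nothing of the kind about $(\tilde X_i,y_i)$. Your fallback (2) is thus exactly the implicit assumption the paper's proof rests on. Fallback (1) does not work as stated: conditioning on $\tilde A$ leaves each $\tilde X_i$ a function of its neighbors' raw features. The McDiarmid and symmetrization steps of Step 3 would then need a dependency-graph argument whose constants grow with the maximum degree, which does not by itself give the stated bound.

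Two smaller points. First, Step 3, like Lemma~\ref{lem:ref-test}, bounds the per-node risk of a $[0,1]$-valued $1$-Lipschitz loss. That is not the expected \emph{sum} of squared errors defining $L_{\mathcal D_G}$ in Eq.~\eqref{eq-testloss}, so you should say which risk you are bounding; the paper's proof makes the same identification silently. Second, the Rademacher term $B/\sqrt{2N}$ equals $\tfrac12\sqrt{\mathrm{GKC}(H^\infty,\mathbf y)}$, not $\sqrt{\mathrm{GKC}(H^\infty,\mathbf y)}$. The factor $2$ in $2\mathcal R$ in Step 3 is what produces $\sqrt{\mathrm{GKC}}$, so chaining Steps 2 and 3 as you wrote them would double the leading term.
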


\section{Kernel-Complexity Edge Sanitization for
Structural Robustness}
\label{sec:kces}

Motivated by the connection between GKC and GNN test error, we propose \textbf{Kernel-Complexity Edge Sanitization} (\textbf{KCES}), a training-free and model-agnostic framework that evaluates the structural influence of graph edges and prunes high-KC edges empirically enriched with harmful perturbations under structural attacks. This framework comprises three key components: \textbf{\textit{(i) Pseudo-Label Generation}}, \textbf{\textit{(ii) Edge KC Score Estimation}}, and \textbf{\textit{(iii) Kernel-Complexity Edge Sanitization}}.

\subsection{Pseudo-Label Generation}
\label{sec:pseudo-generation}
To preserve the training-free and unsupervised nature of KCES, the label vector required by the GKC metric (Eq.~\ref{eq-GKC}) is generated via K-Means~\cite{macqueen1967some}, applied to the row-normalized aggregated node representations:
\begin{equation}
    \hat{\mathbf y} = \operatorname{K\text{-}Means}\!\left(\tilde X,\, k\right).
\end{equation}
Here, $k$ is set to the number of classes, and \(\tilde X\) denotes the row-normalized aggregated features defined in Section~\ref{gram-matrix}. The clustering assignments are mapped to bounded numerical codes before computing GKC. Rather than approximating ground-truth semantics, these pseudo-labels are intended to capture structural smoothness over the kernel-induced geometry, enabling KCES to identify edges that violate local smoothness, such as adversarial inter-community connections.

\subsection{Edge KC Score Estimation}
\label{KCscoredef}
With pseudo-labels providing the basis for GKC computation, we quantify the structural contribution of each edge via the \textbf{KC score}. For an edge $e_{ij}$ connecting nodes $i$ and $j$, we define
\begin{equation}
\label{kc_def}
  KC(i, j) 
  = 
  \left| 
  \mathrm{GKC}\!\left(H^{\infty}, \hat{\mathbf y}\right) 
  - 
  \mathrm{GKC}\!\left(H^{\infty}_{-(i,j)}, \hat{\mathbf y}\right) 
  \right|,
\end{equation}
where $H^{\infty}$ is the Gram matrix of the original graph, $H^{\infty}_{-(i,j)}$ is the Gram matrix of the modified graph $G_{-(i,j)}$ obtained by removing edge $e_{ij}$, and $\hat{\mathbf y}$ denotes the bounded pseudo-label vector generated in Section~\ref{sec:pseudo-generation}. The KC score, \(KC(i,j)\), quantifies the magnitude of the change in pseudo-label-based GKC caused by removing edge \(e_{ij}\). Since it is defined as an absolute difference, KC is direction-agnostic: a large value indicates strong structural influence on the graph-induced kernel complexity.

\begin{corollary}[Edge-specific test error bound]
\label{edge_theo}
Let $G_{-(i,j)}$ be the graph obtained by deleting edge $e_{ij}$, and let $W_t$ denote the GNN parameters after $t$ gradient descent updates on the modified graph. Under the assumptions of Theorem~\ref{thm:test-error}, and for sufficiently large hidden width $m$ and iteration count $t$, the following holds with probability at least $1-\delta$:
\begin{equation}
L_{\mathcal{D}_{G_{-(i,j)}}}(W_t)
\le
\sqrt{\mathrm{GKC}(H^\infty,\mathbf y)}
+
\sqrt{KC(i,j)}
+
\epsilon_N,
\end{equation}
where $\epsilon_N = O\!\left(\sqrt{\frac{\log (N / \lambda_0\delta)}{N}}\right)$. 
Here, $N$ denotes the number of nodes, $\lambda_0$ is a lower bound on $\lambda_{\min}(H^{\infty})$ (Assumption~\ref{assum-e-2}), $KC(i,j)$ denotes the edge KC score computed with the ground-truth label vector $\mathbf y$ in this theoretical bound, and $\mathcal{D}_{G_{-(i,j)}}$ is the data distribution induced by removing edge $e_{ij}$ from graphs sampled from $\mathcal D_G$. The detailed proof is provided in Appendix~\ref{subsec:proof-edge-bound}.
\end{corollary}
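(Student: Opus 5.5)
The plan is to apply Theorem~\ref{thm:test-error} directly to the modified graph $G_{-(i,j)}$, and then control the resulting complexity term by the original GKC plus the edge KC score using an elementary square-root inequality.

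\textbf{Step 1: invoke the test error bound on $G_{-(i,j)}$.} The modified graph is again an undirected graph on $N$ nodes with self-loops. Its normalized aggregation is $\tilde D_{-(i,j)}^{-1/2}\tilde A_{-(i,j)}\tilde D_{-(i,j)}^{-1/2}$, and it has the same node count and the same label vector $\mathbf y$. The data distribution $\mathcal D_{G_{-(i,j)}}$ is the pushforward of $\mathcal D_G$ under this fixed structural change, so the i.i.d.\ feature--label sampling is preserved. Provided Assumption~\ref{assum-e-2} holds for $H^\infty_{-(i,j)}$, in particular $\lambda_{\min}(H^\infty_{-(i,j)})\ge\lambda_0$, Theorem~\ref{thm:test-error} gives, with probability at least $1-\delta$, for sufficiently large $m$ and $t$,
\[
L_{\mathcal D_{G_{-(i,j)}}}(W_t)\le \sqrt{\mathrm{GKC}(H^\infty_{-(i,j)},\mathbf y)}+\epsilon_N .
\]

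\textbf{Step 2: compare the two complexity terms.} Write $a=\mathrm{GKC}(H^\infty,\mathbf y)\ge 0$ and $b=\mathrm{GKC}(H^\infty_{-(i,j)},\mathbf y)\ge 0$. Both are nonnegative because the inverses of positive definite Gram matrices are positive definite. By the definition in Eq.~\ref{kc_def}, with $\mathbf y$ in place of $\hat{\mathbf y}$, we have $KC(i,j)=|a-b|$. Then
\[
b\le a+|a-b|,
\]
and subadditivity of the square root on $[0,\infty)$, namely $\sqrt{u+v}\le\sqrt u+\sqrt v$, yields
\[
\sqrt b\le\sqrt a+\sqrt{KC(i,j)}.
\]
Substituting this into Step 1 gives the claimed bound.

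\textbf{Main obstacle.} The algebra above is routine; the real issue is justifying that the hypotheses of Theorem~\ref{thm:test-error} transfer to the edge-deleted graph. One needs the eigenvalue lower bound $\lambda_0$ to hold uniformly for $H^\infty_{-(i,j)}$, so that the $\epsilon_N$ term keeps the same form and constants. I would handle this by reading Assumption~\ref{assum-e-2} as a uniform condition over the graphs under consideration, i.e.\ those obtained from $G$ by deleting a single edge. This is natural because $H^\infty$ is positive definite whenever no two normalized rows $\tilde X_i$ are parallel, a generic condition that a single edge deletion does not break almost surely.

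The width and iteration thresholds in Theorem~\ref{thm:test-error} depend only on $N$, $\lambda_0$ and $\delta$, so they can be chosen identically for the modified graph. The remaining subtlety is that $W_t$ is trained on $G_{-(i,j)}$, which is exactly the setting in which Theorem~\ref{thm:test-error} is applied in Step 1, so no further coupling between the two training runs is needed.
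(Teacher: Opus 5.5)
Your proposal is correct and matches the paper's proof step for step. You apply the test error bound to $G_{-(i,j)}$, then combine $\mathrm{GKC}(H^\infty_{-(i,j)},\mathbf y)\le \mathrm{GKC}(H^\infty,\mathbf y)+KC(i,j)$ with $\sqrt{u+v}\le\sqrt u+\sqrt v$. The uniform eigenvalue condition you flag as the main obstacle is exactly the ``edge-robust non-degeneracy'' item the paper builds into Assumption~\ref{assum-e-2}, so you can cite it directly and drop the genericity argument.
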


\paragraph{Interpretation}
Corollary~\ref{edge_theo} connects the KC score of an edge $e_{ij}$ to the expected test error of the modified graph $G_{-(i,j)}$, thereby quantifying its structural influence. Edges with large KC values have strong influence on graph-induced kernel complexity. While KC itself does not encode the signed direction of this influence, structural attacks tend to introduce high-influence edges that disrupt local smoothness. This behavior is empirically examined in Section~\ref{subsec:mechanism}. Although the corollary uses the ground-truth-label version of $KC(i,j)$, practical KCES computes the pseudo-label-based score in Eq.~\ref{kc_def} as a surrogate ranking signal. We do not require semantic equivalence; instead, pseudo-labels capture structural partitions of the graph. Since adversarial perturbations primarily violate structural smoothness, pseudo-label-based KC scores can provide a practical ranking signal for edge pruning.

\subsection{Kernel-Complexity Edge Sanitization}
\label{sec:kces-algorithm}
Building on estimated KC scores, KCES sanitizes attacked graphs by removing high-influence edges. Although KC scores are direction-agnostic, structural attacks tend to introduce high-KC perturbations that disrupt local smoothness, as empirically validated in Section~\ref{subsec:mechanism}. The specific \textbf{KCES} procedure is described in Algorithm~\ref{alg:kces}.

\begin{algorithm}[h]
\caption{KCES: Kernel-Complexity Edge Sanitization}
\label{alg:kces}
\begin{flushleft}
\textbf{Input:} Graph $G=(V,E)$; row-normalized features $\tilde X$; pruning ratio $\alpha\in[0,1]$ \\
\textbf{Output:} Sanitized graph $G'=(V,E')$
\end{flushleft}
\begin{algorithmic}[1]

\STATE $\hat{\mathbf y} \leftarrow \operatorname{KMeans}\!\left(\tilde X,\, k\right)$
\hfill \small{\textit{// Pseudo labels}}

\STATE $H^{\infty} \leftarrow \operatorname{GKGM}(\tilde X)$
\hfill \small{\textit{// Kernel Gram matrix}}
\vspace{0.3em}

\STATE $\text{GKC}_0 \leftarrow \operatorname{ComputeGKC}(H^{\infty}, \hat{\mathbf y})$
\hfill \small{\textit{// Baseline complexity}}

\vspace{0.4em}
\FOR{each edge $e_{ij}\in E$}
\vspace{0.3em}
    \STATE $G_{-(i,j)} \leftarrow G \setminus \{e_{ij}\}$
    \hfill \small{\textit{// Remove edge}}
    \vspace{0.3em}

    \STATE $H^{\infty}_{-(i,j)} \leftarrow \operatorname{GKGM}(G_{-(i,j)}, \tilde X)$
    \hfill \small{\textit{// Recompute kernel}}
    \vspace{0.3em}

    \STATE $\text{KC}[e_{ij}] \leftarrow 
    \bigl|\, \text{GKC}_0 -
    \operatorname{ComputeGKC}(H^{\infty}_{-(i,j)},\hat{\mathbf y}) \,\bigr|$
    \hfill \small{\textit{// Compute KC}}
    \vspace{0.3em}
\ENDFOR

\vspace{0.4em}
\STATE $(e_{(1)}, e_{(2)}, \dots, e_{(|E|)}) \leftarrow \operatorname{Sort}(\operatorname{KC}, \text{descending})$
\hfill \small{\textit{// Sort edges by KC}}
\vspace{0.3em}

\STATE $k \leftarrow \lceil \alpha |E| \rceil$
\hfill \small{\textit{// Pruning size}}
\vspace{0.3em}

\STATE $E' \leftarrow E \setminus \{e_{(1)}, \dots, e_{(k)}\}$
\hfill \small{\textit{// Remove top-$k$ edges}}

\STATE \textbf{Return} $G'=(V,E')$

\end{algorithmic}
\end{algorithm}

\paragraph{Applicability of KCES.}
Although our theoretical framework is derived from the analysis of a specific GNN hypothesis space, the resulting KCES framework is broadly applicable across diverse GNN architectures. This versatility stems from its core metric, GKC, which is fundamentally model-agnostic in its computation. This principle is best understood through an analogy with the data covariance matrix; while formally linked to linear models, properties of the covariance matrix like its trace and rank~\cite{bartlett2020benign} serve as universal tools for diagnosing data complexity even in deep learning models~\cite{raghu2017svcca}. Similarly, GKC functions as a specialized diagnostic tool for the graph learning domain. By embedding the graph aggregation mechanism directly into its kernel, GKC provides a diagnostic measure of graph-induced kernel geometry, rather than a purely topological quantity, while remaining independent of any specific GNN parameters. This allows KCES to function as a versatile, plug-and-play defense module capable of enhancing the structural robustness of a wide variety of GNNs, not just those with simple convolutional structures. Our experiments in Section~\ref{defense-performance} empirically confirm the broad applicability of KCES, demonstrating strong robustness across diverse GNN architectures such as GAT, RGCN, AirGNN, and ProGNN.


\paragraph{Computational Complexity of KCES.}
In the worst case, KCES recomputes the graph kernel Gram matrix and its inverse for each candidate edge, leading to time complexity $O(R E (N^2F+M^2F+M^3))$ and space complexity $O(N^2+NF+M^2)$, where $N$, $E$, $F$, $M$, and $R$ denote the number of nodes, edges, feature dimension, sampled node budget up to a constant class factor, and Monte Carlo repetitions, respectively. This worst-case bound is pessimistic. In practice, KCES uses sparse graph operations, local $h$-hop subgraphs of size $L\ll N$, and incremental inverse updates, reducing the practical time and space complexities to $O\!\left(R(EF+E(L^2F+L^2))\right)$ and $O(E+NF+L^2)$, respectively. Edge-wise KC score computations are independent and can be parallelized across GPUs or CPUs to further reduce wall-clock time. When $F$ is fixed or reduced by preprocessing, the practical time complexity simplifies to $O\!\left(R(EF+EL^2)\right)$. As KCES is applied as a one-time preprocessing step, its cost is amortized over subsequent training. The empirical evaluation is reported in Table~\ref{tab:runtime_space}, with detailed complexity analysis provided in the supplementary material (Section~3).

\section{Experiments}
\label{sec:exp}

This section evaluates KCES from four aspects: the mechanism of KC scores under structural attacks, robustness against representative baselines, scalability and computational efficiency, and sensitivity to pseudo-label generation.

\subsection{Overall Setup}
\label{setup}
\paragraph{Datasets} We evaluate the robustness and scalability of KCES on benchmark datasets spanning different graph scales: \textbf{(1) Small-scale graphs}: \textit{Cora}~\cite{sen2008collective}, \textit{Citeseer}~\cite{sen2008collective}, and \textit{Polblogs}~\cite{adamic2005political}; \textbf{(2) Medium-scale graphs}: \textit{Pubmed}~\cite{yang2016revisiting}; \textbf{(3) Large-scale graphs}: \textit{Flickr}~\cite{liu2009social}; and \textbf{(4) Massive-scale graphs}: \textit{Ogbn-Arxiv}~\cite{hu2020open}. Detailed dataset statistics, train/validation/test splits, and a detailed complexity analysis of KCES are provided in the supplementary material available at \url{https://github.com/karpning/KCScore/blob/main/supplementary_material.pdf}.

\paragraph{Structural Attack Strategies} 
To rigorously assess the structural robustness conferred by KCES, we evaluate against a suite of adversarial attacks that maliciously perturb the graph topology prior to model training. We employ representative methods spanning three categories: 
\textbf{(i) Non-targeted attacks}, which aim to degrade overall model performance by modifying the global graph structure, including \textit{Metattack}~\cite{zügner2018adversarial}, \textit{MINMAX}~\cite{xu2019topology}, and \textit{DICE}~\cite{waniek2018hiding}; 
\textbf{(ii) Targeted attacks}, which focus on misleading predictions for specific victim nodes, adopting the widely-used \textit{Nettack}~\cite{zugner2018adversarial}; and 
\textbf{(iii) Random attacks}, which simulate structural noise via random edge injections and deletions. Furthermore, for the massive-scale evaluation on \textit{Ogbn-Arxiv}, we explicitly employ \textit{PRBCD}~\cite{geisler2021robustness}, a scalable projected gradient-based attack to evaluate robustness on large-scale graphs (e.g., \textit{Ogbn-Arxiv}), where traditional attacks become computationally prohibitive.

\paragraph{Defense Baselines}
We compare KCES with a diverse set of baselines grouped into four categories. 
\textbf{(i) Vanilla GNNs:} Graph Convolutional Networks (GCN)~\cite{kipf2016semi} and Graph Attention Networks (GAT)~\cite{velivckovic2018graph}, which serve as standard architectures to quantify performance degradation under attacks. 
\textbf{(ii) Robust-by-design models:} Robust GCN (RGCN)~\cite{zhu2019robust}, AirGNN (AirG)~\cite{liu2021graph}, NoiseGNN (NoiseG)~\cite{ennadir2024simple}, and GNNGuard (G-Guard)~\cite{zhang2020gnnguard}, which incorporate defense mechanisms directly into the model architecture. 
\textbf{(iii) Graph purification methods:} GNN-Jaccard (G-Jac)~\cite{wu2019adversarial} and GNN-SVD (G-SVD)~\cite{entezari2020all}, which pre-process the graph structure to mitigate adversarial perturbations. 
\textbf{(iv) Graph optimization-based defenses:} ProGNN (Pro-G)~\cite{jin2020graph} and GPR-GAE (GPR)~\cite{lee2025self}, which jointly refine or reconstruct the graph structure during the model training process.

\paragraph{Supplementary Material} We provide a supplementary PDF to support reproducibility and completeness. It includes dataset statistics and splits (Section~2), detailed KCES complexity analysis (Section~3), and extended experimental analyses (Section~4). In particular, Section~4.2 reports additional structural-attack results under \textit{PRBCD} and \textit{LRBCD}~\cite{geisler2021robustness}; Section~4.4 provides feature-perturbation evaluations that clarify the scope of KCES; Section~4.5 reports runtime and memory analyses for KC score computation across graph scales; and Section~4.6 studies pruning-ratio ablations. The supplementary PDF is available at \url{https://github.com/karpning/KCScore/blob/main/supplementary_material.pdf}.

\subsection{Mechanism of KC Scores under Structural Graph Attacks} 
\label{subsec:mechanism}

This section examines whether KC scores capture harmful structural perturbations, as suggested by Corollary~\ref{edge_theo}. Since KC scores measure the magnitude of an edge's influence on GKC rather than the signed direction of this influence, high-KC edges are not necessarily harmful in clean graphs. However, under structural attacks, adversarial perturbations tend to create high-influence edges that disrupt kernel-label alignment. We therefore study whether structural attacks shift KC scores toward larger values and whether removing high-KC edges restores robustness. We conduct two complementary studies on \textit{Cora} and \textit{Pubmed}: first, we compare KC-score distributions on clean graphs, \textit{Metattack}-perturbed graphs, and graphs after high-KC edge pruning; second, we evaluate whether KC scores provide an effective pruning signal by comparing \textbf{High-KC Pruning}, \textbf{Low-KC Pruning}, and \textbf{Random Pruning} on attacked graphs. The first removes edges with the largest KC scores, the second removes edges with the smallest KC scores, and the third serves as a baseline. We vary the pruning ratio from 0.00 to 0.95 in increments of 0.05. Results on clean graphs are provided in the supplementary material (Section~4.7).

\begin{figure}[htbp]
  \centering
  \includegraphics[width=1.0\linewidth]{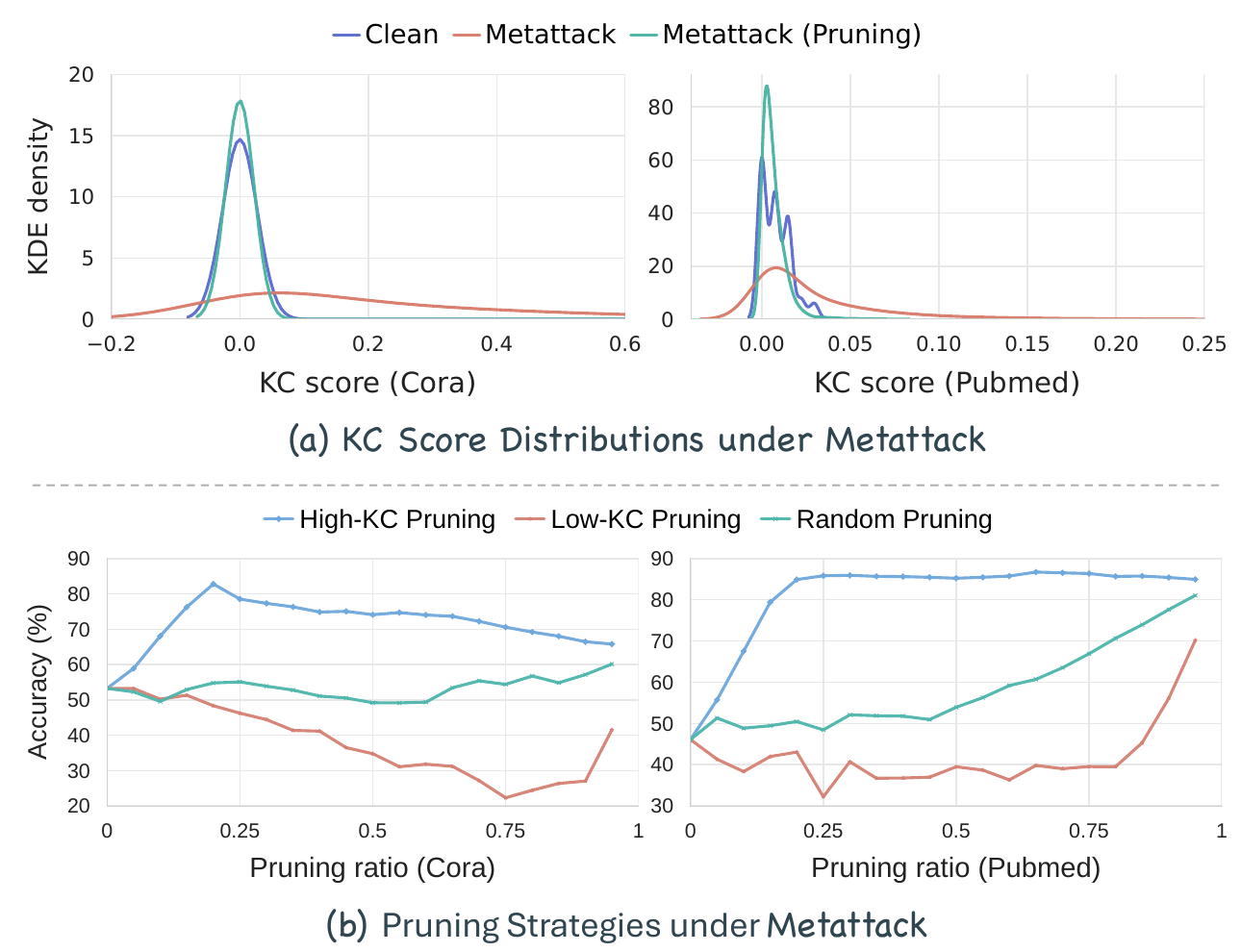}
  \vspace{-2em}
  \caption{Mechanism of KC scores under structural attacks.
  (a) KC-score distributions on clean, \textit{Metattack}-perturbed, and
  KCES-pruned graphs.
  (b) Test accuracy under different pruning strategies on
  \textit{Metattack}-perturbed graphs.}
  \label{fig:mechanism}
\end{figure}

\vspace{0.5em}
Figure~\ref{fig:mechanism}(a) shows that \textit{Metattack} shifts KC scores toward larger values, creating a heavier high-KC tail. After high-KC pruning, the distribution moves closer to that of the clean graph, suggesting that harmful perturbations are enriched in the high-KC region under attack. This supports the link between KC scores and generalization degradation in Corollary~\ref{edge_theo}. Figure~\ref{fig:mechanism}(b) further shows that High-KC Pruning consistently improves or preserves accuracy, whereas Low-KC Pruning substantially degrades performance and Random Pruning lies in between. Overall, the pruning effectiveness follows \textbf{High-KC $>$ Random $>$ Low-KC}, indicating that high-KC pruning effectively mitigates harmful structural perturbations in attacked graphs.

\begin{table*}[t]
\caption{Defense performance (Accuracy $\pm$ Std) under structural attacks. Non-targeted and random attacks use a perturbation budget of 25\%, allowing modifications to up to 25\% of edges. \textit{Nettack} targets nodes with degree greater than 10 and perturbs their incident edges. Bold indicates the best performance, and “–” denotes inapplicable settings.}
\label{tab-1:robustness}
\centering
\small
\setlength{\tabcolsep}{3.2pt}
\renewcommand{\arraystretch}{1.12}

\rowcolors{2}{white}{tableZebra}

\resizebox{\textwidth}{!}{%
\begin{tabular}{llcccccccccc c}
\toprule
\rowcolor{white} \textbf{Dataset} & \textbf{Attack} &
\textbf{GCN} & \textbf{GAT} & \textbf{RGCN} & \textbf{Pro-G} & \textbf{G-SVD} &
\textbf{G-Jac} & \textbf{G-Guard} & \textbf{AirG} &
\textbf{NoiseG} & \textbf{GPR} & \cellcolor{myBlue} \textbf{KCES (Ours)} \\
\midrule

\cellcolor{white} & \textit{Clean} & $95.71 \pm 0.79$ & $95.40 \pm 0.41$ & $95.29 \pm 0.52$ & $95.60 \pm 0.46$ & $94.68 \pm 0.88$ & -- & -- & -- & $95.90 \pm 0.98$ & $94.74 \pm 0.40$ & \cellcolor{myBlue} $\mathbf{96.01} \pm \mathbf{0.18}$ \\
\cellcolor{white} & \textit{Random} & $81.29 \pm 0.61$ & $85.01 \pm 0.02$ & $82.23 \pm 0.76$ & $86.50 \pm 0.12$ & $88.34 \pm 0.83$ & -- & -- & -- & $83.82 \pm 0.88$ & $88.45 \pm 0.63$ & \cellcolor{myBlue} $\mathbf{89.46} \pm \mathbf{0.77}$ \\
\cellcolor{white} & \textit{Nettack} & $92.59 \pm 0.73$ & $85.79 \pm 0.23$ & $93.15 \pm 0.70$ & $95.20 \pm 0.59$ & $95.37 \pm 0.57$ & -- & -- & -- & $94.33 \pm 0.25$ & $92.96 \pm 0.06$ & \cellcolor{myBlue} $\mathbf{96.48} \pm \mathbf{0.82}$ \\
\cellcolor{white} \multirow{-2}{*}{\makecell[l]{\textit{Polblogs}\\(\textit{small})}} & \textit{DICE} & $71.47 \pm 0.98$ & $77.10 \pm 0.87$ & $71.16 \pm 0.36$ & $74.74 \pm 0.18$ & $76.48 \pm 0.73$ & -- & -- & -- & $73.78 \pm 0.35$ & $76.48 \pm 0.39$ & \cellcolor{myBlue} $\mathbf{93.04} \pm \mathbf{0.06}$ \\
\cellcolor{white} & \textit{MINMAX} & $70.14 \pm 0.54$ & $68.04 \pm 0.89$ & $82.10 \pm 0.42$ & $60.73 \pm 0.37$ & $62.47 \pm 0.72$ & -- & -- & -- & $62.99 \pm 0.70$ & $59.71 \pm 0.87$ & \cellcolor{myBlue} $\mathbf{94.17} \pm \mathbf{0.47}$ \\
\cellcolor{white} & \textit{Metattack} & $63.09 \pm 0.12$ & $63.60 \pm 0.20$ & $62.67 \pm 0.13$ & $65.23 \pm 0.24$ & $81.28 \pm 0.58$ & -- & -- & -- & $62.58 \pm 0.91$ & $61.61 \pm 0.31$ & \cellcolor{myBlue} $\mathbf{82.72} \pm \mathbf{0.32}$ \\
\midrule

\cellcolor{white} & \textit{Clean} & $83.65 \pm 0.57$ & $83.90 \pm 0.15$ & $82.89 \pm 0.99$ & $\mathbf{85.16} \pm \mathbf{0.77}$ & $78.16 \pm 0.07$ & $82.69 \pm 0.74$ & $78.87 \pm 0.77$ & $80.14 \pm 0.84$ & $82.39 \pm 0.42$ & $84.46 \pm 0.28$ & \cellcolor{myBlue} $84.04 \pm 0.42$ \\
\cellcolor{white} & \textit{Random} & $77.57 \pm 0.38$ & $79.23 \pm 0.75$ & $74.55 \pm 0.13$ & $79.58 \pm 0.32$ & $78.87 \pm 0.36$ & $77.36 \pm 0.93$ & $77.11 \pm 0.57$ & $65.83 \pm 0.18$ & $75.50 \pm 0.51$ & $79.03 \pm 0.14$ & \cellcolor{myBlue} $\mathbf{79.67} \pm \mathbf{0.69}$ \\
\cellcolor{white} & \textit{Nettack} & $57.83 \pm 0.78$ & $58.23 \pm 0.08$ & $59.04 \pm 0.51$ & $69.67 \pm 0.68$ & $74.70 \pm 1.21$ & $76.69 \pm 0.64$ & $62.65 \pm 0.03$ & $67.47 \pm 0.72$ & $60.42 \pm 0.04$ & $58.01 \pm 0.75$ & \cellcolor{myBlue} $\mathbf{82.24} \pm \mathbf{0.43}$ \\
\cellcolor{white} \multirow{-2}{*}{\makecell[l]{\textit{Cora}\\(\textit{small})}} & \textit{DICE} & $76.16 \pm 0.69$ & $77.06 \pm 0.31$ & $73.59 \pm 0.91$ & $75.95 \pm 0.40$ & $72.68 \pm 0.72$ & $77.11 \pm 0.22$ & $75.50 \pm 0.26$ & $72.24 \pm 0.73$ & $73.59 \pm 0.23$ & $77.76 \pm 0.19$ & \cellcolor{myBlue} $\mathbf{82.59} \pm \mathbf{0.09}$ \\
\cellcolor{white} & \textit{MINMAX} & $60.66 \pm 0.23$ & $61.26 \pm 0.73$ & $59.80 \pm 0.75$ & $64.73 \pm 0.23$ & $59.45 \pm 0.10$ & $72.23 \pm 0.69$ & $70.57 \pm 0.95$ & $64.73 \pm 0.86$ & $61.82 \pm 0.93$ & $62.42 \pm 0.55$ & \cellcolor{myBlue} $\mathbf{78.42} \pm \mathbf{0.47}$ \\
\cellcolor{white} & \textit{Metattack} & $53.12 \pm 0.83$ & $58.35 \pm 0.22$ & $51.35 \pm 0.72$ & $63.37 \pm 0.16$ & $61.92 \pm 0.28$ & $75.28 \pm 0.19$ & $70.77 \pm 0.97$ & $63.83 \pm 0.92$ & $55.18 \pm 0.79$ & $56.24 \pm 0.53$ & \cellcolor{myBlue} $\mathbf{82.99} \pm \mathbf{0.08}$ \\
\midrule

\cellcolor{white} & \textit{Clean} & $72.51 \pm 0.61$ & $72.69 \pm 0.55$ & $71.97 \pm 0.60$ & $71.74 \pm 0.59$ & $69.60 \pm 0.56$ & $72.98 \pm 0.06$ & $71.03 \pm 0.19$ & $72.23 \pm 0.39$ & $71.33 \pm 0.58$ & $72.16 \pm 0.46$ & \cellcolor{myBlue} $\mathbf{73.02} \pm \mathbf{0.93}$ \\
\cellcolor{white} & \textit{Random} & $70.38 \pm 0.83$ & $69.31 \pm 0.95$ & $67.06 \pm 0.26$ & $72.36 \pm 0.19$ & $67.59 \pm 0.15$ & $71.21 \pm 0.03$ & $72.73 \pm 0.60$ & $65.28 \pm 0.70$ & $69.43 \pm 0.97$ & $68.96 \pm 0.01$ & \cellcolor{myBlue} $\mathbf{72.81} \pm \mathbf{0.62}$ \\
\cellcolor{white} & \textit{Nettack} & $52.38 \pm 0.94$ & $59.19 \pm 0.46$ & $49.21 \pm 0.97$ & $72.23 \pm 1.04$ & $74.60 \pm 0.51$ & $72.14 \pm 0.69$ & $72.95 \pm 0.58$ & $\mathbf{77.78} \pm \mathbf{0.77}$ & $63.49 \pm 0.71$ & $50.03 \pm 0.62$ & \cellcolor{myBlue} $76.14 \pm 0.60$ \\
\cellcolor{white} \multirow{-2}{*}{\makecell[l]{\textit{Citeseer}\\(\textit{small})}} & \textit{DICE} & $67.71 \pm 0.72$ & $66.60 \pm 0.43$ & $66.17 \pm 0.85$ & $72.15 \pm 0.61$ & $67.35 \pm 0.03$ & $71.14 \pm 0.19$ & $69.01 \pm 0.34$ & $67.20 \pm 0.48$ & $67.24 \pm 0.93$ & $67.54 \pm 0.89$ & \cellcolor{myBlue} $\mathbf{72.45} \pm \mathbf{0.84}$ \\
\cellcolor{white} & \textit{MINMAX} & $66.29 \pm 0.45$ & $67.54 \pm 0.43$ & $61.02 \pm 0.44$ & $69.90 \pm 0.80$ & $64.57 \pm 0.86$ & $71.20 \pm 0.02$ & $68.60 \pm 0.86$ & $66.28 \pm 0.32$ & $70.08 \pm 0.22$ & $68.48 \pm 0.49$ & \cellcolor{myBlue} $\mathbf{72.80} \pm \mathbf{0.36}$ \\
\cellcolor{white} & \textit{Metattack} & $57.64 \pm 0.59$ & $61.20 \pm 0.66$ & $56.81 \pm 0.75$ & $66.33 \pm 0.46$ & $66.29 \pm 0.39$ & $70.14 \pm 0.53$ & $64.75 \pm 0.60$ & $65.23 \pm 0.34$ & $59.94 \pm 0.94$ & $59.48 \pm 0.43$ & \cellcolor{myBlue} $\mathbf{71.86} \pm \mathbf{0.87}$ \\
\midrule

\cellcolor{white} & \textit{Clean} & $85.72 \pm 0.05$ & $84.89 \pm 0.84$ & $84.72 \pm 0.06$ & $85.19 \pm 0.43$ & $84.53 \pm 0.86$ & $\mathbf{86.19} \pm \mathbf{0.97}$ & $84.49 \pm 0.79$ & $84.85 \pm 0.40$ & $85.11 \pm 0.45$ & $85.07 \pm 0.88$ & \cellcolor{myBlue} $86.17 \pm 0.52$ \\
\cellcolor{white} & \textit{Random} & $84.11 \pm 0.28$ & $81.02 \pm 0.72$ & $83.75 \pm 0.10$ & $84.28 \pm 0.41$ & $82.61 \pm 0.63$ & $84.27 \pm 0.64$ & $83.87 \pm 0.03$ & $83.09 \pm 0.69$ & $83.50 \pm 0.52$ & $83.02 \pm 0.59$ & \cellcolor{myBlue} $\mathbf{85.83} \pm \mathbf{0.82}$ \\
\cellcolor{white} & \textit{Nettack} & $66.67 \pm 0.36$ & $76.73 \pm 0.88$ & $72.58 \pm 0.09$ & $72.60 \pm 0.14$ & $80.10 \pm 0.45$ & $85.48 \pm 0.12$ & $83.33 \pm 0.65$ & $85.48 \pm 0.96$ & $65.44 \pm 0.64$ & $67.37 \pm 0.30$ & \cellcolor{myBlue} $\mathbf{86.24} \pm \mathbf{0.09}$ \\
\cellcolor{white} \multirow{-2}{*}{\makecell[l]{\textit{Pubmed}\\(\textit{medium})}} & \textit{DICE} & $81.68 \pm 0.46$ & $76.93 \pm 0.19$ & $81.44 \pm 0.56$ & $80.73 \pm 0.30$ & $80.39 \pm 0.12$ & $82.93 \pm 0.70$ & $82.28 \pm 0.70$ & $83.09 \pm 0.27$ & $81.42 \pm 0.08$ & $78.86 \pm 0.16$ & \cellcolor{myBlue} $\mathbf{85.74} \pm \mathbf{0.39}$ \\
\cellcolor{white} & \textit{MINMAX} & $55.67 \pm 0.46$ & $60.01 \pm 0.97$ & $54.64 \pm 0.24$ & $69.29 \pm 0.37$ & $80.50 \pm 0.06$ & $84.51 \pm 0.83$ & $81.69 \pm 0.57$ & $84.02 \pm 0.13$ & $57.51 \pm 0.81$ & $56.32 \pm 0.68$ & \cellcolor{myBlue} $\mathbf{85.64} \pm \mathbf{0.97}$ \\
\cellcolor{white} & \textit{Metattack} & $46.08 \pm 0.39$ & $49.72 \pm 0.37$ & $45.99 \pm 0.92$ & $72.08 \pm 0.51$ & $82.75 \pm 0.25$ & $84.22 \pm 0.39$ & $83.37 \pm 0.89$ & $84.83 \pm 0.20$ & $47.17 \pm 0.29$ & $48.32 \pm 0.66$ & \cellcolor{myBlue} $\mathbf{86.45} \pm \mathbf{0.45}$ \\
\midrule

\cellcolor{white} & \textit{Clean} & $56.24 \pm 0.81$ & $47.83 \pm 0.25$ & $39.62 \pm 0.73$ & $54.68 \pm 0.68$ & $60.46 \pm 0.55$ & $74.04 \pm 0.38$ & $74.31 \pm 0.74$ & $75.56 \pm 0.10$ & $60.91 \pm 0.74$ & $62.62 \pm 0.15$ & \cellcolor{myBlue} $\mathbf{76.25} \pm \mathbf{0.93}$ \\
\cellcolor{white} & \textit{Random} & $62.82 \pm 0.62$ & $60.80 \pm 0.42$ & $62.44 \pm 0.16$ & $65.43 \pm 0.90$ & $76.54 \pm 0.41$ & $74.83 \pm 0.96$ & $74.85 \pm 0.55$ & $76.02 \pm 0.44$ & $69.13 \pm 0.21$ & $50.03 \pm 0.50$ & \cellcolor{myBlue} $\mathbf{76.74} \pm \mathbf{0.65}$ \\
\cellcolor{white} & \textit{Nettack} & $38.82 \pm 0.55$ & $43.12 \pm 0.39$ & $59.87 \pm 0.36$ & $70.31 \pm 0.74$ & $58.70 \pm 0.45$ & $72.58 \pm 0.48$ & $74.51 \pm 0.33$ & $\mathbf{75.63} \pm \mathbf{0.03}$ & $49.67 \pm 0.80$ & $39.82 \pm 0.95$ & \cellcolor{myBlue} $73.47 \pm 0.97$ \\
\cellcolor{white} \multirow{-2}{*}{\makecell[l]{\textit{Flickr}\\(\textit{large})}} & \textit{DICE} & $51.71 \pm 0.32$ & $48.11 \pm 0.11$ & $47.34 \pm 0.39$ & $71.23 \pm 0.17$ & $73.38 \pm 0.93$ & $73.35 \pm 0.02$ & $73.59 \pm 0.24$ & $73.08 \pm 0.83$ & $51.52 \pm 0.57$ & $52.09 \pm 0.90$ & \cellcolor{myBlue} $\mathbf{73.69} \pm \mathbf{0.67}$ \\
\cellcolor{white} & \textit{MINMAX} & $14.57 \pm 0.97$ & $11.71 \pm 0.26$ & $27.13 \pm 0.02$ & $19.68 \pm 0.58$ & $39.17 \pm 0.78$ & $74.82 \pm 0.33$ & $75.04 \pm 0.08$ & $75.45 \pm 0.65$ & $28.27 \pm 0.12$ & $9.80 \pm 0.67$ & \cellcolor{myBlue} $\mathbf{76.86} \pm \mathbf{0.90}$ \\
\cellcolor{white} & \textit{Metattack} & $36.93 \pm 0.86$ & $37.29 \pm 0.06$ & $31.72 \pm 0.21$ & $65.05 \pm 0.82$ & $59.08 \pm 0.36$ & $74.85 \pm 0.95$ & $75.05 \pm 0.75$ & $75.90 \pm 0.05$ & $49.67 \pm 0.38$ & $30.92 \pm 0.07$ & \cellcolor{myBlue} $\mathbf{76.63} \pm \mathbf{0.82}$ \\
\bottomrule
\end{tabular}}
\end{table*}
\vspace{-0.5em}
\subsection{Defense against structural attacks}
\label{defense-performance}

We evaluate KCES against various defenses following the protocol in Section~\ref{setup}. Unless otherwise specified, KCES is applied to a GCN backbone, with the pruning ratio $\alpha$ tuned via grid search over $[0.1, 0.9]$ on the validation set, and the best-performing configuration is reported. Detailed attack configurations are summarized in Table~\ref{tab-1:robustness}. All results are reported as percentages, and “–” indicates that a method is not applicable under the corresponding attack setting.

Table~\ref{tab-1:robustness} shows that KCES consistently outperforms all baselines, achieving state-of-the-art performance under most adversarial settings. Notably, under structural attacks, KCES often restores performance close to—or even exceeding—that on the clean graph, indicating that its edge sanitization effectively counteracts adversarial perturbations without compromising predictive accuracy. A particularly striking result appears on \textit{Flickr}, where KCES attains higher accuracy than on the original clean topology. This observation suggests that large-scale real-world graphs may contain noisy or redundant edges, consistent with prior findings~\cite{dai2022towards, dong2023towards}. By identifying and pruning such detrimental connections, KCES effectively serves as a structural regularizer, simultaneously improving adversarial robustness and clean generalization.

\subsection{Scalability and Efficiency Analysis}
\label{sec:scale_efficiency}

While Section~\ref{sec:kces-algorithm} provides a theoretical complexity analysis, it is crucial to verify the practical scalability of KCES on massive-scale graphs. In this section, we conduct a rigorous evaluation to demonstrate that KCES is computationally efficient and not limited to small benchmarks.
For the scalability test on the massive \textit{Ogbn-Arxiv} dataset (over 1M edges), we employ \textit{PRBCD}~\cite{geisler2021robustness}, a scalable gradient-based attack, with a perturbation budget ($ptb$) of 0.10, as traditional methods (e.g., \textit{Metattack}) are computationally infeasible at this scale.

To ensure a comprehensive comparison, we evaluate KCES from two aspects:
\textbf{(1) Robustness at Scale:} We report defense performance on
\textit{Ogbn-Arxiv} under \textit{PRBCD} in Table~\ref{tab:ogbn_arxiv_horizontal_clean}. \textbf{(2) Computational Overhead:} We profile time and memory usage on the medium-scale \textit{Pubmed} dataset under \textit{Metattack} ($ptb=0.10$) (Table~\ref{tab:runtime_space}). For a fair comparison across methods that operate at different stages, we report the total wall-clock time as the sum of method-specific graph preprocessing time and 200 training epochs. Thus, for preprocessing-based methods such as \textit{G-SVD}, \textit{G-Jac}, \textit{GPR}, and KCES, the reported time includes their preprocessing overhead before training, whereas methods without an explicit preprocessing stage have zero additional preprocessing cost. This unified measurement reflects the practical cost of using each defense pipeline. We use \textit{Pubmed} for this profiling because it allows us to include heavy optimization-based baselines such as \textit{ProGNN}, which may encounter Out-Of-Memory (OOM) failures on larger graphs. For completeness, the standalone optimized KC-score preprocessing time of KCES across different graph scales is reported in the supplementary material (Section~4.5).

\begin{table*}[htbp]
\centering
\caption{Performance (\%) on \textit{Ogbn-Arxiv} under clean setting and \textit{PRBCD} attack.}
\label{tab:ogbn_arxiv_horizontal_clean}
\small
\setlength{\tabcolsep}{2pt} 
\renewcommand{\arraystretch}{1.3} 

\rowcolors{2}{tableZebra}{white}

\begin{tabularx}{\textwidth}{l *{10}{>{\centering\arraybackslash}X} >{\centering\arraybackslash}p{4em}}
\toprule
\rowcolor{white}
\textbf{Metric} 
& \textbf{GCN} & \textbf{GAT} & \textbf{RGCN} & \textbf{Pro-G} 
& \textbf{G-SVD} & \textbf{G-Jac} & \textbf{G-Guard} 
& \textbf{AirG} & \textbf{NoiseG} & \textbf{GPR} & \cellcolor{myBlue} \textbf{KCES} \\
\midrule
\textit{Clean} 
& 67.51 & 66.54 & 68.05 & OOM & 58.62 & 67.42 & 66.45 & 66.96 & 68.92 & 68.16 & \cellcolor{myBlue} \textbf{69.32} \\

\textit{PRBCD} 
& 40.66 & 42.53 & 47.27 & OOM & 53.94 & 38.76 & 45.24 & 40.96 & 37.24 & 42.13 & \cellcolor{myBlue} \textbf{58.62} \\
\bottomrule
\end{tabularx}
\end{table*}
\begin{table*}[htbp]
\centering
\caption{Time and space complexity comparison on \textbf{\textit{Pubmed}} under \textit{Metattack}. Total runtime is measured over 200 training epochs.}
\label{tab:runtime_space}
\small
\setlength{\tabcolsep}{2pt} 
\renewcommand{\arraystretch}{1.3} 

\rowcolors{2}{tableZebra}{white}

\begin{tabularx}{\textwidth}{l *{10}{>{\centering\arraybackslash}X} >{\centering\arraybackslash}p{4em}}
\toprule
\rowcolor{white} 
\textbf{Metric} 
& \textbf{GCN} & \textbf{GAT} & \textbf{RGCN} & \textbf{Pro-G} 
& \textbf{G-SVD} & \textbf{G-Jac} & \textbf{G-Gd} 
& \textbf{AirG} & \textbf{NoiseG} & \textbf{GPR} & \cellcolor{myBlue} \textbf{KCES} \\
\midrule
\textit{Space (MB)}
& 1,484 & 5,645 & 8,950 & 17,518 & 6,266 & 1,490 & 9,689 & 3,304 & 1,728 & 3,456 & \cellcolor{myBlue} 1,539 \\

\textit{Time (s)}
& 2.30 & 6.51 & 9.56 & 7,340.23 & 6.55 & 3.46 & 415.23 & 2.44 & 2.68 & 24.56 & \cellcolor{myBlue} 6.40 \\
\bottomrule
\end{tabularx}
\end{table*}

The results provide strong evidence of KCES’s scalability and efficiency.  On \textit{Ogbn-Arxiv} (Table~\ref{tab:ogbn_arxiv_horizontal_clean}), optimization-based defenses such as \textit{ProGNN} fail to execute due to memory constraints (OOM), whereas KCES scales successfully and achieves state-of-the-art robustness (58.62\%), outperforming scalable heuristics such as \textit{GNN-SVD} and \textit{GNN-Jaccard}. As shown in Table~\ref{tab:runtime_space}, KCES incurs only minimal computational overhead and remains memory-efficient, in sharp contrast to iterative optimization methods that require orders of magnitude more time and memory (e.g., \textit{ProGNN} takes over $7{,}000$s). These results demonstrate that KCES enables large-scale structural defense without the heavy computational burden typical of optimization-based approaches.

\subsection{Sensitivity Analysis of Pseudo-Label Generation}
\label{subsec:sensitivity}
We study the sensitivity of KCES to pseudo-label generation by varying both the number of clusters and the clustering algorithm. Experiments are conducted on \textit{Cora} and \textit{Citeseer} under \textit{Metattack} with a perturbation budget of 0.25. The cluster number $K$ ranges from 2 to 30. In addition, fixing $K=6$, we compare three clustering methods: \textbf{Spectral Clustering}~\cite{luxburg2007tutorial}, \textbf{Gaussian Mixture Models (GMM)}~\cite{bishop2006pattern}, and \textbf{K-Means}~\cite{macqueen1967some}.

\begin{table}[htbp]
\centering
\caption{Impact of the number of clusters $K$ on KCES.}
\label{tab:k_sensitivity}
\small
\setlength{\tabcolsep}{4pt} 
\renewcommand{\arraystretch}{1.3} 

\rowcolors{2}{white}{tableZebra}

\begin{tabularx}{\columnwidth}{l *{6}{>{\centering\arraybackslash}X}}
\toprule
\rowcolor{white} \textbf{Dataset} & \textbf{2} & \textbf{5} & \textbf{10} & \textbf{15} & \textbf{20} & \textbf{30} \\
\midrule
\textit{Cora}     & 80.31 & 81.25 & 80.92 & 80.25 & 80.51 & 79.86 \\
\textit{Citeseer} & 71.50 & 71.20 & 71.03 & 70.84 & 71.29 & 70.97 \\
\bottomrule
\end{tabularx}
\end{table}

\begin{table}[htbp]
\centering
\caption{Impact of different clustering algorithms on KCES.}
\label{tab:cluster_algo}
\small
\setlength{\tabcolsep}{4pt} 
\renewcommand{\arraystretch}{1.3} 

\rowcolors{2}{white}{tableZebra}

\begin{tabularx}{\columnwidth}{l *{4}{>{\centering\arraybackslash}X}}
\toprule
\rowcolor{white} \textbf{Dataset} & \textbf{Attack} & \textbf{Spectral} & \textbf{GMM} & \textbf{K-Means} \\
\midrule
\textit{Cora}    & 52.30 & 80.03 & 80.51 & 80.11 \\
\textit{Citeseer} & 56.59 & 71.03 & 72.20 & 72.79 \\
\bottomrule
\end{tabularx}
\end{table}

Results in Table~\ref{tab:k_sensitivity} show that KCES is largely insensitive to the cluster number $K$, exhibiting stable performance across a wide range of values. This suggests that KCES relies on detecting \textbf{structural inconsistencies} rather than recovering exact semantics. Theoretically, adversarial edges typically bridge communities and thus span cluster boundaries regardless of granularity (whether $K=2$ or $K=30$). Consequently, \textbf{while absolute KC scores may shift with $\mathbf{\hat{y}}$, the relative ranking of structurally detrimental edges remains stable}. This stability justifies using pseudo-labels $\mathbf{\hat{y}}$ as a proxy for $\mathbf{y}$. Furthermore, the robustness across algorithms in Table~\ref{tab:cluster_algo} confirms that KCES exploits fundamental structural discrepancy signals—specifically the violation of local smoothness—rather than artifacts of specific clustering configurations.

\subsection{Plug-and-Play Compatibility}
\label{subsec:plug-and-play}

KCES can also be used as a preprocessing module for existing defenses. To
evaluate this plug-and-play property, we apply KCES before representative
methods, including GAT, RGCN, ProGNN, GNN-SVD, GCN-Jaccard, and GNNGuard.
We denote the KCES-enhanced variant by ``(K)''.

\begin{figure*}[t]
  \centering
  \includegraphics[width=1.0\linewidth]{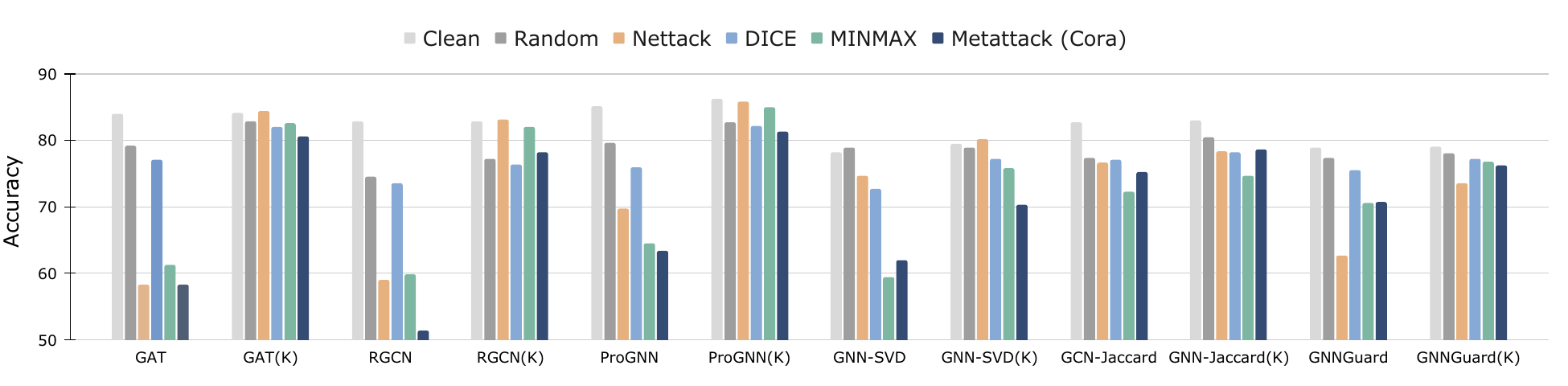}
  \caption{Plug-and-play compatibility of KCES on Cora. ``(K)'' denotes applying
  KCES before the corresponding defense. KCES generally improves robustness
  under structural attacks while preserving clean accuracy.}
  \label{fig-5:plug-and-play}
\end{figure*}

Figure~\ref{fig-5:plug-and-play} shows that KCES consistently improves or maintains the performance of existing defenses across clean and attacked graphs. The improvements are especially clear under stronger structural attacks such as \textit{Nettack}, \textit{MINMAX}, and \textit{Metattack}, indicating that KCES effectively removes harmful edges before downstream defense models are trained. These results confirm that KCES is complementary to existing robust GNN methods rather than merely a standalone defense. Full results are provided in the supplementary material.

\section{Related Works}
\label{sec:related_works}

\paragraph{Robustness to Structural Attacks in GNNs}
Adversarial attacks aim to degrade model performance through subtle input perturbations~\cite{szegedy2013intriguing, madry2017towards, papernot2016limitations, kurakin2018adversarial, moosavi2016deepfool}. On graphs, structural attacks directly modify the topology~\cite{zugner2018adversarial, xu2019topology, waniek2018hiding, alom2025gottack}, posing a severe threat to message-passing GNNs. Existing defenses mainly fall into three categories. \textit{Graph purification} methods denoise the input structure, including GNN-Jaccard~\cite{wu2019adversarial}, which filters edges by feature similarity, GNN-SVD~\cite{entezari2020all}, which applies low-rank approximation, and GPR-GAE~\cite{lee2025self}, which uses a self-supervised graph auto-encoder. \textit{Robust architecture} methods improve model resilience through structural or propagation design, such as ProGNN~\cite{jin2020graph}, AirGNN~\cite{liu2021graph}, and NoiseGNN~\cite{ennadir2024simple}. \textit{Graph adversarial training}, exemplified by RGCN~\cite{zhu2019robust}, trains models on perturbed graphs. However, many defenses rely on heuristic assumptions, generalize poorly to unseen attacks, or require costly retraining and optimization. These limitations motivate a principled, effective, and training-free defense.

\paragraph{Gram Matrix Applications}
Gram matrices are widely used to analyze neural networks from both model and optimization perspectives. They help study how architectures learn target functions~\cite{rahimi2007random}, characterize invariance properties in MLPs~\cite{tsuchida2018invariance}, and explain training dynamics in over-parameterized or two-layer networks~\cite{allen2019convergence, arora2019fine}. Recent training-free data valuation methods further use Gram matrices to measure the influence of individual data points in Euclidean domains~\cite{nohyun2022data}. However, such ideas remain underexplored for GNN robustness. Inspired by these advances~\cite{arora2019fine, nohyun2022data}, we extend Gram-matrix-based reasoning to graph-structured data by deriving a kernel representation from graph aggregation, leading to KCES, a training-free and model-agnostic defense against structural perturbations.

\paragraph{Kernel-view GNN Theory}
GNN generalization has been studied through classical capacity measures such as Rademacher complexity~\cite{garg2020generalization}, as well as kernel-based perspectives. The Graph Neural Tangent Kernel (GNTK) shows that infinite-width GNNs converge to architecture-specific kernels, with subsequent work further developing this view~\cite{du2019graph, cosmo2024graph, krishnagopal2023graph, zhou2023explainability, tang2022graphqntk}. These approaches are largely model-centric: their kernels describe the behavior of specific GNN architectures rather than providing a direct data-centric measure of graph complexity. In contrast, we introduce Graph Kernel Complexity (GKC), a computable metric obtained by embedding graph aggregation into the kernel definition. GKC is computed from the graph-feature pair and pseudo-labels, is independent of specific GNN parameters, and appears in the test error bound in Theorem~\ref{thm:test-error}. This data-centric view enables KCES to improve structural robustness through training-free edge sanitization. A detailed comparison with prior kernel methods is provided in the supplementary material (Section~1).

\section{Conclusion}
\label{sec:conclusion}

In this work, we presented Kernel-Complexity Edge Sanitization (KCES), a training-free and model-agnostic framework that improves the structural robustness of GNNs by connecting graph-induced kernel complexity with generalization and performing targeted edge sanitization via generalization-guided KC scores. As a lightweight preprocessing module requiring no retraining, KCES is computationally efficient, scalable to large graphs, compatible with diverse GNN architectures, and empirically achieves consistent robustness gains while largely preserving clean accuracy. More broadly, our results suggest that graph robustness can benefit from complexity-aware preprocessing, where structural modifications are guided by their influence on graph-induced kernel geometry rather than only by local similarity or reconstruction heuristics. This perspective opens a promising direction for designing robust graph learning systems that combine theoretical generalization signals with practical, scalable graph sanitization.

\paragraph{Limitations.} KCES is designed for structural perturbations and is therefore not a direct defense against node- or feature-level attacks; consistent with this scope, feature-perturbation experiments in the supplementary material (Section~4.4) show only limited gains. Since KCES relies on pseudo-labels induced by aggregated node features, its benefits may be less pronounced on heterophilous graphs where local aggregation is less aligned with class-homophilic smoothness (Section~4.9 of the supplementary material), and as a pruning-based method, it may discard useful information under highly structured perturbations. Future work could extend KCES toward feature-aware or edge-reweighting variants to better handle non-structural attacks and highly structured perturbation patterns.

\section*{Ethics and Privacy Statement}

This work uses only publicly available benchmark datasets and models and does not involve human subjects, private information, or sensitive user data. We do not identify direct privacy or ethical concerns associated with the experiments.

\appendix
\section{Experimental Setup}
\label{sec:exp-setup}

We implement attack and defense baselines using the \textbf{DeepRobust}
library~\cite{jin2020graph}. Unless otherwise specified, all models are trained
for 200 epochs with ReLU activation and Adam optimizer, using a learning rate
of 0.01 and weight decay of $1\times10^{-5}$. Dataset-specific hyperparameters
are summarized in Table~\ref{tab:hyper}. All experiments are conducted on a
dedicated server with eight NVIDIA RTX A6000 GPUs, each with 48GB memory.

\begin{table}[htbp]
\centering
\caption{Hyperparameters used in our experiments.}
\label{tab:hyper}
\small
\setlength{\tabcolsep}{4pt} 
\renewcommand{\arraystretch}{1.3} 
\rowcolors{2}{white}{tableZebra}

\begin{tabularx}{\columnwidth}{l *{5}{>{\centering\arraybackslash}X}}
\toprule
\rowcolor{white} & \textbf{\textit{Polblogs}} & \textbf{\textit{Cora}} & \textbf{\textit{Citeseer}} & \textbf{\textit{Pubmed}} & \textbf{\textit{Flickr}} \\
\midrule
\textbf{\# Layers}         & 2 & 2 & 2 & 2 & 2 \\
\textbf{Hidden Dim.}       & [16, 2] & [16, 7] & [16, 6] & [32, 3] & [16, 9] \\
\textbf{Dropout}           & 0.05 & 0.05 & 0.05 & 0.05 & 0.05 \\
\bottomrule
\end{tabularx}
\end{table}


\section{Theoretical Analysis of the Graph Kernel Model}
\label{sec:theory-gkm}

This section provides the formal justification for the graph kernel
complexity bound used in the main paper. Our analysis follows the
two-layer ReLU kernel generalization framework of~\cite{arora2019fine},
but instantiates it on graph-aggregated node features. We first state the
graph-kernel setup and assumptions, then recall the reference kernel
results, and finally derive our training, test, and edge-specific bounds.

\subsection{Graph Kernel Setup}
\label{subsec:theory-setup}

Consider an undirected graph
\(G=(X,\tilde A,\tilde D,\mathbf y)\) with \(N\) nodes, node feature
matrix \(X\in\mathbb R^{N\times F}\), adjacency matrix
\(\tilde A\in\{0,1\}^{N\times N}\) including self-loops, and degree
matrix \(\tilde D_{ii}=\sum_j\tilde A_{ij}\). We define the normalized
graph aggregation operator and the aggregated node features as
\begin{equation}
    T = \tilde D^{-\frac12}\tilde A\tilde D^{-\frac12},
    \qquad
    \tilde X = TX .
\end{equation}

We analyze the following two-layer graph kernel model:
\begin{equation}
\label{eq:gnn-kernel-model}
f_{\mathrm{GNN}}(X_i,\tilde A,\tilde D)
=
\frac{1}{\sqrt m}
\sum_{r=1}^m
a_r \sigma\!\left(W_r^\top \tilde X_i\right),
\end{equation}
where \(m\) is the number of hidden units, \(W_r\in\mathbb R^F\) is the
first-layer weight of the \(r\)-th neuron, \(a_r\in\{-1,1\}\) is the
fixed second-layer coefficient, and \(\sigma(\cdot)\) is the ReLU
activation.

The empirical training loss is
\begin{equation}
\label{eq:graph-train-loss}
L(W)
=
\frac12
\sum_{i=1}^{N}
\left(
y_i - f_{\mathrm{GNN}}(X_i,\tilde A,\tilde D)
\right)^2,
\end{equation}
and the expected test loss is
\begin{equation}
\label{eq:graph-test-loss}
L_{\mathcal D_G}(W)
=
\mathbb E_{G\sim\mathcal D_G}[L(W)] .
\end{equation}

The infinite-width graph kernel Gram matrix is defined as
\begin{equation}
\label{eq:graph-kernel-gram}
H^\infty_{ij}
=
\mathbb E_{w\sim\mathcal N(0,I)}
\left[
\tilde X_i^\top \tilde X_j
\,
\mathbb I
\{w^\top \tilde X_i\ge 0,\,
  w^\top \tilde X_j\ge 0\}
\right].
\end{equation}
When \(\|\tilde X_i\|_2=1\), this admits the closed form
\begin{equation}
\label{eq:graph-kernel-closed-form}
H^\infty_{ij}
=
\frac{
\tilde X_i^\top \tilde X_j
\left(
\pi-\arccos(\tilde X_i^\top \tilde X_j)
\right)
}{2\pi}.
\end{equation}

The Graph Kernel Complexity (GKC) is defined as
\begin{equation}
\label{eq:gkc-appendix}
\mathrm{GKC}(H^\infty,\mathbf y)
=
\frac{2\mathbf y^\top (H^\infty)^{-1}\mathbf y}{N}.
\end{equation}

\subsection{Assumptions}
\label{subsec:theory-assumptions}

\begin{assumption}
\label{assum-e-2}
The following conditions hold throughout the analysis.
\begin{enumerate}[leftmargin=1.5em]
    \item \textbf{Normalized aggregated features.}
    For every node \(i\in[N]\), \(\|\tilde X_i\|_2=1\).

    \item \textbf{Bounded labels.}
    For every node \(i\in[N]\), \(|y_i|\le 1\).

    \item \textbf{Non-degenerate graph kernel.}
    With probability at least \(1-\delta\), the graph kernel Gram matrix
    satisfies
    \begin{equation}
        \lambda_{\min}(H^\infty)\ge \lambda_0>0 .
    \end{equation}

    \item \textbf{Edge-robust non-degeneracy.}
    For every candidate edge \(e_{ij}\) considered by KCES, let
    \(G_{-(i,j)}\) be the graph obtained by removing \(e_{ij}\), and let
    \(H^\infty_{-(i,j)}\) be the corresponding graph kernel Gram matrix.
    With probability at least \(1-\delta\),
    \begin{equation}
        \lambda_{\min}(H^\infty_{-(i,j)})\ge \lambda_0>0 .
    \end{equation}

    \item \textbf{Initialization.}
    The first-layer weights are initialized as
    \(W_r(0)\sim\mathcal N(0,\kappa^2 I)\), where \(0<\kappa\le 1\).
    The second-layer coefficients are initialized as
    \(a_r\sim\mathrm{Unif}(\{-1,1\})\) and kept fixed.

    \item \textbf{Gradient descent.}
    Only the first-layer weights are optimized, using
    \begin{equation}
        W_{t+1}=W_t-\eta\nabla_W L(W_t).
    \end{equation}
\end{enumerate}
\end{assumption}

\paragraph{Remark on non-degeneracy.}
The non-degeneracy conditions above are the graph-kernel counterparts of
the positive-definiteness assumption required by standard ReLU kernel
generalization bounds. They ensure that both the original graph and the
edge-deleted graphs considered by KCES induce well-conditioned kernel
matrices. This is the only graph-specific condition needed to instantiate
the reference kernel results on graph-aggregated features.

\subsection{Reference Kernel Results}
\label{subsec:reference-results}

We recall the two kernel results from~\cite{arora2019fine} that our
analysis relies on. Consider a two-layer ReLU network trained on inputs
\(\{z_i\}_{i=1}^n\), labels \(\mathbf s\), and infinite-width kernel
matrix \(H^{\mathrm{ref}}\). If the inputs are normalized, labels are
bounded, and
\(\lambda_{\min}(H^{\mathrm{ref}})\ge \lambda_0^{\mathrm{ref}}>0\),
then gradient descent satisfies the following training and test error
guarantees.

\begin{lemma}[Reference training dynamics]
\label{lem:ref-training}
Under the reference assumptions of~\cite{arora2019fine}, let
\[
\kappa
=
O\!\left(\frac{\epsilon\delta}{\sqrt n}\right),
\qquad
m
=
\Omega\!\left(
\frac{n^7}{(\lambda_0^{\mathrm{ref}})^4\kappa^2\delta^4\epsilon^2}
\right),
\qquad
\eta
=
O\!\left(\frac{\lambda_0^{\mathrm{ref}}}{n^2}\right).
\]
Then, with probability at least \(1-\delta\), for all \(t\ge 0\),
\begin{equation}
\label{eq:ref-training}
L^{\mathrm{ref}}(W_t)
=
\sqrt{
\sum_{\ell=1}^{n}
(1-\eta\lambda_\ell)^{2t}
(\mathbf v_\ell^\top \mathbf s)^2
}
\pm \epsilon,
\end{equation}
where \(\lambda_\ell\) and \(\mathbf v_\ell\) are the eigenvalues and
eigenvectors of \(H^{\mathrm{ref}}\).
\end{lemma}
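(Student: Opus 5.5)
Since this lemma is Theorem~4.1 of~\cite{arora2019fine} restated in our notation, the proof mirrors theirs; here is how I would organize it. Write the network outputs on the training inputs as \(\mathbf u(t)\in\mathbb R^n\). I read \(L^{\mathrm{ref}}(W_t)\) in Eq.~\ref{eq:ref-training} as the residual norm \(\|\mathbf s-\mathbf u(t)\|_2\), which is the quantity Arora et al.\ control. The strategy is to show that, in the over-parameterized regime, the residual evolves almost linearly,
\begin{equation*}
\mathbf s-\mathbf u(t+1)=(I-\eta H^{\mathrm{ref}})(\mathbf s-\mathbf u(t))+\boldsymbol\xi(t),
\end{equation*}
with a perturbation \(\boldsymbol\xi(t)\) that is small in total. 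Unrolling this recursion gives \(\mathbf s-\mathbf u(t)=(I-\eta H^{\mathrm{ref}})^t(\mathbf s-\mathbf u(0))+\sum_{\tau<t}(I-\eta H^{\mathrm{ref}})^{t-1-\tau}\boldsymbol\xi(\tau)\). Expanding \((I-\eta H^{\mathrm{ref}})^t\mathbf s\) in the eigenbasis \(\{\mathbf v_\ell\}\) gives exactly the square root in Eq.~\ref{eq:ref-training}. It then remains to show that everything else contributes at most \(\epsilon\).

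\textbf{Step 1: the initial output is small.} Since \(W_r(0)\sim\mathcal N(0,\kappa^2 I)\) and \(\|z_i\|_2=1\), each \(u_i(0)\) has mean zero and variance \(O(\kappa^2)\). Markov's inequality then gives \(\|\mathbf u(0)\|_2=O(\kappa\sqrt n/\delta)\). With \(\kappa=O(\epsilon\delta/\sqrt n)\) this is \(O(\epsilon)\). Because \(\|I-\eta H^{\mathrm{ref}}\|_2\le 1\), propagating this term costs at most \(O(\epsilon)\).

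\textbf{Step 2: the kernel stays close to \(H^{\mathrm{ref}}\).} Define the finite-width Gram matrix
\begin{equation*}
H(t)_{ij}=\frac1m\, z_i^\top z_j\sum_r \mathbb I\{W_r(t)^\top z_i\ge0,\ W_r(t)^\top z_j\ge0\}.
\end{equation*}
At initialization, concentration over the \(m\) neurons gives \(\|H(0)-H^{\mathrm{ref}}\|_2=\tilde O(n/\sqrt m)\), where the indicator depends only on directions, so the scale \(\kappa\) does not matter. Next I would argue by induction that \(\|W_r(t)-W_r(0)\|_2\le R=O(\sqrt n\,\|\mathbf s-\mathbf u(0)\|_2/(\sqrt m\,\lambda_0^{\mathrm{ref}}))\). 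This uses the linear convergence \(\|\mathbf s-\mathbf u(t)\|_2\le(1-\eta\lambda_0^{\mathrm{ref}}/2)^t\|\mathbf s-\mathbf u(0)\|_2\), which holds as long as \(\lambda_{\min}(H(t))\ge\lambda_0^{\mathrm{ref}}/2\); summing the geometric series of gradient norms bounds the weight movement. An activation pattern can flip only for neurons with \(|W_r(0)^\top z_i|\le R\). By Gaussian anti-concentration, the fraction of such neurons is \(O(R/\kappa)\), and with high probability it is \(O(R/(\kappa\delta))\). Hence \(\|H(t)-H(0)\|_2=O(nR/(\kappa\delta))\).

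\textbf{Step 3: bounding the perturbation.} The term \(\boldsymbol\xi(t)\) has two sources: \(\eta(H(t)-H^{\mathrm{ref}})(\mathbf s-\mathbf u(t))\), and a second part from neurons whose activation pattern flips within the step, which is bounded by the same anti-concentration argument. Summing over \(t\) with the geometric decay from Step~2 gives \(\sum_\tau\|\boldsymbol\xi(\tau)\|_2\lesssim \|H(\cdot)-H^{\mathrm{ref}}\|_2\,\|\mathbf s-\mathbf u(0)\|_2/\lambda_0^{\mathrm{ref}}\). The prescribed polynomial lower bound on \(m\) makes this at most \(\epsilon\), and \(\eta=O(\lambda_0^{\mathrm{ref}}/n^2)\) keeps \(I-\eta H(t)\) contractive.

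\textbf{Main obstacle.} Step~2 and Step~3 are coupled: the convergence rate needs the kernel to stay close to \(H^{\mathrm{ref}}\), and kernel closeness needs the weights to move little, which in turn needs fast convergence. I would close this loop with a single induction on \(t\) carrying three hypotheses: the weight-movement bound, the lower bound on \(\lambda_{\min}(H(t))\), and the geometric decay of the residual. The delicate part is the bookkeeping of the \(1/\kappa\) factor from anti-concentration against \(m\), which is what produces the \(\kappa^{-2}\) and \(n^7\) dependence in the width requirement.
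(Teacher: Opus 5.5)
Your proposal is correct and follows the paper's route. The paper gives no proof of its own and simply imports Theorem~4.1 of Arora et al.; your sketch (small initial output from the choice of $\kappa$, Du et al.-style induction on weight movement, kernel stability via Gaussian anti-concentration, and unrolling the perturbed linear recursion around $H^{\mathrm{ref}}$) reconstructs exactly that argument. Two side remarks:
\begin{itemize}
\item Reading $L^{\mathrm{ref}}(W_t)$ as the residual norm $\|\mathbf s-\mathbf u(t)\|_2$, rather than the half-squared loss, is the right choice, because only under that reading does the displayed identity hold.
\item Your sharper drift bound $O(nR/(\kappa\delta))$ needs a per-node (e.g.\ Chernoff) concentration argument. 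You do not need it, though: the cruder $O(n^2R/(\kappa\delta))$ Frobenius bound from a single Markov step already gives precisely the stated $n^7$ width requirement.
\end{itemize}
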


\begin{lemma}[Reference test error bound]
\label{lem:ref-test}
Under the reference assumptions of~\cite{arora2019fine}, let
\[
\kappa
=
O\!\left(\frac{\lambda_0^{\mathrm{ref}}\delta}{n}\right),
\qquad
m
\ge
\kappa^{-2}\cdot
\mathrm{poly}\!\left(n,(\lambda_0^{\mathrm{ref}})^{-1},\delta^{-1}\right).
\]
For
\[
t
\ge
\Omega\!\left(
\frac{1}{\eta\lambda_0^{\mathrm{ref}}}
\log\frac{n}{\delta}
\right),
\]
the trained model satisfies, with probability at least \(1-\delta\),
\begin{equation}
\label{eq:ref-test}
L_{\mathcal D}^{\mathrm{ref}}(W_t)
\le
\sqrt{
\frac{2\mathbf s^\top (H^{\mathrm{ref}})^{-1}\mathbf s}{n}
}
+
O\!\left(
\sqrt{
\frac{\log\frac{n}{\lambda_0^{\mathrm{ref}}\delta}}{n}
}
\right).
\end{equation}
\end{lemma}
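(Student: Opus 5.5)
The plan is to reproduce the argument of~\cite{arora2019fine}: bound how far gradient descent moves the first-layer weights from initialization, bound the Rademacher complexity of networks within that distance, and convert this into a test bound by a standard uniform-convergence step. Lemma~\ref{lem:ref-test} is stated for a generic normalized-input two-layer ReLU network, so no graph structure enters. The only inputs are normalized $z_i$, bounded $\mathbf s$, $\lambda_{\min}(H^{\mathrm{ref}})\ge\lambda_0^{\mathrm{ref}}$, and the over-parameterized regime of Lemma~\ref{lem:ref-training}.

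\textbf{Step 1: distance from initialization.} I will first bound the distance from initialization, aiming for
\begin{equation*}
\|W_t-W_0\|_F\le\sqrt{\mathbf s^\top (H^{\mathrm{ref}})^{-1}\mathbf s}+O\!\left(\frac{n\kappa}{\lambda_0^{\mathrm{ref}}\delta}\right)+\frac{\mathrm{poly}(n,1/\lambda_0^{\mathrm{ref}},1/\delta)}{m^{1/4}\kappa^{1/2}} .
\end{equation*}
The route uses the near-linear dynamics in the wide regime. For width $m\gg\mathrm{poly}(n)/\kappa^2$, the activation patterns of all but a vanishing fraction of neurons are frozen along the trajectory, so the Gram matrix $H(t)$ stays within $o(\lambda_0^{\mathrm{ref}})$ of $H^{\mathrm{ref}}$ in operator norm. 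The residual then evolves as $\mathbf u(t+1)-\mathbf s\approx(I-\eta H^{\mathrm{ref}})(\mathbf u(t)-\mathbf s)$, which is the content of Lemma~\ref{lem:ref-training}.

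Writing $\mathrm{vec}(W_t-W_0)\approx\sum_{\tau<t}\eta Z_0(I-\eta H^{\mathrm{ref}})^\tau \mathbf s$, where $Z_0$ is the feature Jacobian at initialization with $Z_0^\top Z_0\approx H^{\mathrm{ref}}$, gives
\begin{equation*}
\|W_t-W_0\|_F^2\approx \mathbf s^\top (H^{\mathrm{ref}})^{-1}\bigl(I-(I-\eta H^{\mathrm{ref}})^t\bigr)^2\mathbf s\le \mathbf s^\top (H^{\mathrm{ref}})^{-1}\mathbf s .
\end{equation*}
The choice $\kappa=O(\lambda_0^{\mathrm{ref}}\delta/n)$ controls the initial output $\mathbf u(0)$, which is $O(\kappa\sqrt{n/\delta})$ in norm. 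Its contribution to the distance, of order $\|\mathbf u(0)\|_2/\sqrt{\lambda_0^{\mathrm{ref}}}$, is thereby made small.

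\textbf{Step 2: Rademacher complexity.} I will bound the Rademacher complexity of $\mathcal F_{R,B}=\{f_W:\|W_r-W_r(0)\|\le R\ \forall r,\ \|W-W_0\|_F\le B\}$. Again using that few neurons change activation pattern, $f_W$ is within a small error of the linearization $\frac{1}{\sqrt m}\sum_r a_r\mathbb I\{W_r(0)^\top z\ge0\}W_r^\top z$. The linearized class has complexity at most $\frac{B}{\sqrt{2n}}\bigl(1+(\tfrac{2\log(2/\delta)}{m})^{1/4}\bigr)$, since the empirical covariance of the frozen features concentrates around $\frac{1}{2}$ times the input Gram matrix. Substituting the Step~1 value of $B$ yields complexity at most $\sqrt{\mathbf s^\top (H^{\mathrm{ref}})^{-1}\mathbf s/(2n)}$ plus small terms.

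\textbf{Step 3: uniform convergence and final bound.} Training error is below $1/\sqrt n$ after $t\ge\Omega(\frac{1}{\eta\lambda_0^{\mathrm{ref}}}\log\frac n\delta)$, by Lemma~\ref{lem:ref-training}. I then apply the standard Rademacher bound for a $1$-Lipschitz, bounded loss (clipped to $[0,1]$). This multiplies the complexity by $2$, giving $2\sqrt{\mathbf s^\top (H^{\mathrm{ref}})^{-1}\mathbf s/(2n)}=\sqrt{2\mathbf s^\top (H^{\mathrm{ref}})^{-1}\mathbf s/n}$, plus $O(\sqrt{\log(1/\delta)/n})$.

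Because $B$ is data dependent, I take a union bound over a geometric grid of radii $B$. These lie in $[0,\sqrt{n/\lambda_0^{\mathrm{ref}}}]$, since $\mathbf s^\top (H^{\mathrm{ref}})^{-1}\mathbf s\le n/\lambda_0^{\mathrm{ref}}$ for $|s_i|\le1$. This produces the $\log\frac{n}{\lambda_0^{\mathrm{ref}}\delta}$ inside the remainder.

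\textbf{Main obstacle.} The delicate step is Step~1: showing that the Frobenius distance is governed exactly by $\sqrt{\mathbf s^\top (H^{\mathrm{ref}})^{-1}\mathbf s}$ rather than the crude $\sqrt n\,\|\mathbf s\|/\lambda_0^{\mathrm{ref}}$. This requires controlling the accumulated perturbation $\sum_\tau\eta(Z_\tau-Z_0)\cdot(\text{residual})$ uniformly in $t$. I would first bound the per-neuron movement by $R=O(\sqrt n/(\sqrt m\,\lambda_0^{\mathrm{ref}}\delta))$. From this, the fraction of neurons whose activation pattern flips is at most $O(R/\kappa)$, and the error terms can be charged to the $m^{-1/4}\kappa^{-1/2}$ slack.
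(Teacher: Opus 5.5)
Your outline is the argument of \cite{arora2019fine}: their distance-from-initialization lemma, their Rademacher bound for $\mathcal F_{R,B}$, and the uniform-convergence step of their generalization theorem. That source is all the paper relies on, since it recalls this lemma without proof. Steps~1 and~2 and the training-error part of Step~3 match the source. Reading $L^{\mathrm{ref}}_{\mathcal D}$ as the risk of a $[0,1]$-valued loss that is $1$-Lipschitz in the prediction is the right interpretation, and a necessary one.

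The step that fails as written is the union bound over a \emph{geometric} grid of radii. You place $f_{W_t}$ in $\mathcal F_{R,B_{i^*}}$, where $B_{i^*}$ is the first grid radius above $\|W_t-W_0\|_F$. With a fixed ratio $\rho>1$ you only get $B_{i^*}\le\rho\,\|W_t-W_0\|_F$. After multiplying by $2/\sqrt{2n}$, the leading term is therefore $\rho\sqrt{2\mathbf s^\top(H^{\mathrm{ref}})^{-1}\mathbf s/n}$, not the coefficient-one term the lemma asserts. (A constant-ratio grid also has only logarithmically many points, so it cannot be the source of the $\log\frac{n}{\lambda_0^{\mathrm{ref}}\delta}$ you attribute to it.)

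Use instead the arithmetic grid $B_i=i$, as \cite{arora2019fine} do. Under the stated $\kappa$ and $m$, the Step~1 slack terms $O(n\kappa/(\lambda_0^{\mathrm{ref}}\delta))$ and $\mathrm{poly}(n,1/\lambda_0^{\mathrm{ref}},1/\delta)/(m^{1/4}\kappa^{1/2})$ are $O(1)$. Hence $\|W_t-W_0\|_F\le\sqrt{n/\lambda_0^{\mathrm{ref}}}+O(1)$ and $i^*=O(\sqrt{n/\lambda_0^{\mathrm{ref}}})$. Now $B_{i^*}\le\|W_t-W_0\|_F+1$, and the extra $1$ costs only $O(1/\sqrt n)$. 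Allotting failure probability $\delta/(c\,i^2)$ to the $i$-th class makes the union bound cost $O(\sqrt{\log(i^{*2}/\delta)/n})=O(\sqrt{\log\frac{n}{\lambda_0^{\mathrm{ref}}\delta}/n})$. This polynomial-size grid is exactly where the logarithm in the statement comes from.

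Two smaller points. First, Step~3 uses that the samples are i.i.d.\ from $\mathcal D$ and that non-degeneracy holds with high probability for a \emph{fixed} $\lambda_0^{\mathrm{ref}}$, so that $R$ and the grid are fixed before the sample is drawn; this belongs in your list of inputs. Second, in Step~2 only the diagonal quantities $\frac1m\sum_r\mathbb I\{W_r(0)^\top z_i\ge0\}\approx\frac12$ enter the bound. The off-diagonal entries of the frozen-feature covariance concentrate around $H^{\mathrm{ref}}_{ij}$, not $\frac12 z_i^\top z_j$, though this does not affect your argument.
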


\subsection{Instantiation on Graph-Aggregated Features}
\label{subsec:graph-instantiation}

The graph kernel model in Eq.~\eqref{eq:gnn-kernel-model} is a two-layer
ReLU model applied to the graph-aggregated inputs
\(\{\tilde X_i\}_{i=1}^N\). Therefore, the reference kernel results can be
instantiated through the correspondence
\begin{equation}
    n \leftarrow N,
    \qquad
    z_i \leftarrow \tilde X_i,
    \qquad
    \mathbf s \leftarrow \mathbf y,
    \qquad
    H^{\mathrm{ref}} \leftarrow H^\infty,
    \qquad
    \lambda_0^{\mathrm{ref}} \leftarrow \lambda_0 .
\end{equation}

Under Assumption~\ref{assum-e-2}, the normalized-input, bounded-label,
positive-definiteness, initialization, fixed-second-layer, and gradient
descent conditions required by the reference theory are all satisfied.
Thus, Lemmas~\ref{lem:ref-training} and~\ref{lem:ref-test} directly yield
the graph-specific training and test error bounds below.

\subsection{Proof of the Training Error Bound}
\label{subsec:proof-training}

\begin{theorem}[Training error dynamics]
\label{train_theo_fo}
Under Assumption~\ref{assum-e-2}, let
\[
\kappa
=
O\!\left(\frac{\epsilon\delta}{\sqrt N}\right),
\qquad
m
=
\Omega\!\left(
\frac{N^7}{\lambda_0^4\kappa^2\delta^4\epsilon^2}
\right),
\qquad
\eta
=
O\!\left(\frac{\lambda_0}{N^2}\right).
\]
Then, with probability at least \(1-\delta\) over initialization, for all
\(t\ge 0\),
\begin{equation}
\label{eq:graph-training-bound}
L(W_t)
=
\sqrt{
\sum_{\ell=1}^{N}
(1-\eta\lambda_\ell)^{2t}
(\mathbf v_\ell^\top \mathbf y)^2
}
\pm \epsilon,
\end{equation}
where \(\lambda_\ell\) and \(\mathbf v_\ell\) are the eigenvalues and
eigenvectors of \(H^\infty\).
\end{theorem}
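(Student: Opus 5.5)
The plan is to obtain Theorem~\ref{train_theo_fo} as a direct instantiation of Lemma~\ref{lem:ref-training}, using the correspondence fixed in Section~\ref{subsec:graph-instantiation}: $n \leftarrow N$, $z_i \leftarrow \tilde X_i$, $\mathbf s \leftarrow \mathbf y$, $H^{\mathrm{ref}} \leftarrow H^\infty$, $\lambda_0^{\mathrm{ref}} \leftarrow \lambda_0$. The key observation is that, once the graph $\tilde A$ is fixed, the model in Eq.~\eqref{eq:gnn-kernel-model} is exactly a two-layer ReLU network with fixed $\pm1$ output weights. Its inputs are the deterministic (given $X$ and $\tilde A$) vectors $\tilde X_i$, and it is trained on the squared loss Eq.~\eqref{eq:graph-train-loss} by gradient descent on $W$ only. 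The graph therefore enters only through the inputs, and the reference gradient-descent analysis never uses how the inputs were produced.

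The verification proceeds hypothesis by hypothesis. First, item 1 of Assumption~\ref{assum-e-2} gives unit-norm inputs. Under unit norm, the expectation form Eq.~\eqref{eq:graph-kernel-gram} of $H^\infty$ coincides with the reference infinite-width ReLU Gram matrix, and its closed form is Eq.~\eqref{eq:graph-kernel-closed-form}. Second, item 2 gives bounded labels. Third, item 3 gives $\lambda_{\min}(H^\infty)\ge\lambda_0$. Fourth, items 5 and 6 match the reference initialization and gradient-descent scheme. With these in hand, substituting $n=N$ and $\lambda_0^{\mathrm{ref}}=\lambda_0$ into the choices of $\kappa$, $m$ and $\eta$ in Lemma~\ref{lem:ref-training} reproduces exactly the parameter conditions stated in the theorem. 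Since $H^\infty$ is symmetric positive definite, its eigenpairs $(\lambda_\ell,\mathbf v_\ell)$ are well defined, and Eq.~\eqref{eq:ref-training} becomes Eq.~\eqref{eq:graph-training-bound}.

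The main obstacle is the probability bookkeeping. The reference lemma holds with probability $1-\delta$ over the initialization for a \emph{fixed} positive-definite Gram matrix. Here, by contrast, the positive-definiteness in item 3 is itself only guaranteed with probability $1-\delta$ over the graph data. I would handle this by conditioning on the realization of $(X,\tilde A)$ and applying Lemma~\ref{lem:ref-training} on the event $\{\lambda_{\min}(H^\infty)\ge\lambda_0\}$, so that the stated probability is taken over initialization given that event, as the theorem phrases it. If an unconditional statement is wanted instead, a union bound gives probability at least $1-2\delta$, and rescaling $\delta\to\delta/2$ changes only constants inside the $O(\cdot)$ and $\Omega(\cdot)$ conditions.

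A secondary point is the informal version in Theorem~\ref{thm:train-error}, which writes the loss as $\|(I-\eta H^\infty)^t\mathbf y\|_2^2$. The square root appearing in Eq.~\eqref{eq:graph-training-bound} is exactly $\|(I-\eta H^\infty)^t\mathbf y\|_2$ in the eigenbasis of $H^\infty$. The loss in Eq.~\eqref{eq:graph-train-loss} is the squared residual norm (up to the factor $\tfrac12$), whereas the quantity the reference lemma actually controls is the residual norm $\|\mathbf y-\mathbf u(t)\|_2$. I would therefore state that the reference lemma controls this residual norm, and interpret Eq.~\eqref{eq:graph-training-bound} accordingly.
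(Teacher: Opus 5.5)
Your proposal is correct and matches the paper's proof. The paper likewise applies Lemma~\ref{lem:ref-training} to the inputs $\tilde X_i$ through the correspondence $n\leftarrow N$, $\mathbf s\leftarrow\mathbf y$, $H^{\mathrm{ref}}\leftarrow H^\infty$, $\lambda_0^{\mathrm{ref}}\leftarrow\lambda_0$, citing Assumption~\ref{assum-e-2} for the hypotheses. Your two extra caveats are valid refinements that the paper's one-line proof glosses over: item~3 is a probability-over-data statement, so one must condition on $\{\lambda_{\min}(H^\infty)\ge\lambda_0\}$ or use a union bound, and the underlying result controls the residual norm $\|\mathbf y-\mathbf u(t)\|_2$ rather than the loss $\tfrac12\|\mathbf y-\mathbf u(t)\|_2^2$ of Eq.~\eqref{eq:graph-train-loss}, so the displayed identity holds only under your reinterpretation.
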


\begin{proof}
Apply Lemma~\ref{lem:ref-training} to the graph-aggregated inputs
\(\{\tilde X_i\}_{i=1}^N\) using the correspondence in
Section~\ref{subsec:graph-instantiation}. Assumption~\ref{assum-e-2}
ensures that the required normalization, bounded-label, non-degeneracy,
initialization, fixed-second-layer, and gradient descent conditions hold.
Substituting \(n=N\), \(\mathbf s=\mathbf y\), and
\(H^{\mathrm{ref}}=H^\infty\) gives Eq.~\eqref{eq:graph-training-bound}.
\end{proof}

\subsection{Proof of the Test Error Bound}
\label{subsec:proof-test}

\begin{theorem}[Test error bound]
\label{test_theo_fo}
Fix \(\delta\in(0,1)\). Under Assumption~\ref{assum-e-2}, let
\[
\kappa
=
O\!\left(\frac{\lambda_0\delta}{N}\right),
\qquad
m
\ge
\kappa^{-2}\cdot
\mathrm{poly}(N,\lambda_0^{-1},\delta^{-1}).
\]
For
\[
t
\ge
\Omega\!\left(
\frac{1}{\eta\lambda_0}
\log\frac{N}{\delta}
\right),
\]
the GNN trained by gradient descent satisfies, with probability at least
\(1-\delta\),
\begin{equation}
\label{eq:graph-test-bound}
L_{\mathcal D_G}(W_t)
\le
\sqrt{\mathrm{GKC}(H^\infty,\mathbf y)}
+
O\!\left(
\sqrt{
\frac{\log\frac{N}{\lambda_0\delta}}{N}
}
\right).
\end{equation}
\end{theorem}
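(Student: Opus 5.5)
The plan is to obtain Theorem~\ref{test_theo_fo} as a direct instantiation of Lemma~\ref{lem:ref-test}, exactly parallel to the proof of Theorem~\ref{train_theo_fo}. Concretely, I will invoke the correspondence of Section~\ref{subsec:graph-instantiation}: $n\leftarrow N$, $z_i\leftarrow\tilde X_i$, $\mathbf s\leftarrow\mathbf y$, $H^{\mathrm{ref}}\leftarrow H^\infty$, and $\lambda_0^{\mathrm{ref}}\leftarrow\lambda_0$. The model in Eq.~\eqref{eq:gnn-kernel-model} is literally a two-layer ReLU network with fixed $\pm1$ output weights acting on the inputs $\tilde X_i$. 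Moreover, the closed form~\eqref{eq:graph-kernel-closed-form} coincides with the reference ReLU Gram matrix evaluated on these inputs. So the network, its gradient-descent trajectory, and its kernel are identical to the reference objects once the graph aggregation $T X$ is treated as a fixed preprocessing map.

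The first step is to verify the hypotheses of Lemma~\ref{lem:ref-test} item by item from Assumption~\ref{assum-e-2}:
\begin{itemize}[leftmargin=1.5em]
\item unit-norm inputs follow from item~1;
\item bounded labels follow from item~2;
\item $\lambda_{\min}(H^\infty)\ge\lambda_0$ follows from item~3;
\item the Gaussian initialization with scale $\kappa$ and the fixed uniform signs follow from item~5;
\item first-layer-only gradient descent follows from item~6.
\end{itemize}
With the parameter choices $\kappa=O(\lambda_0\delta/N)$, $m\ge\kappa^{-2}\mathrm{poly}(N,\lambda_0^{-1},\delta^{-1})$, and $t\ge\Omega(\frac{1}{\eta\lambda_0}\log\frac N\delta)$, which are the reference conditions after substitution, the lemma yields
\[
L_{\mathcal D_G}(W_t)\le\sqrt{2\mathbf y^\top(H^\infty)^{-1}\mathbf y/N}+O\Bigl(\sqrt{\log\tfrac{N}{\lambda_0\delta}/N}\Bigr).
\]
The first term equals $\sqrt{\mathrm{GKC}(H^\infty,\mathbf y)}$ by Eq.~\eqref{eq:gkc-appendix}.

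The second step is probability bookkeeping. Item~3 holds only with probability $1-\delta$, and the lemma's conclusion holds with probability $1-\delta$ over initialization. A union bound gives probability $1-2\delta$, so I will rescale $\delta\to\delta/2$; this is absorbed into the $O(\cdot)$ term since $\log(2N/\lambda_0\delta)=O(\log(N/\lambda_0\delta))$.

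The main obstacle is justifying that the reference generalization argument, which assumes i.i.d.\ samples $(z_i,s_i)$, transfers to aggregated features. Aggregation couples neighboring rows $\tilde X_i$, so those rows are not independent even though the raw pairs $(X_i,y_i)$ are drawn independently. My first approach is to condition on the fixed graph $\tilde A$ and to read the test loss $L_{\mathcal D_G}$ in Eq.~\eqref{eq:graph-test-loss} as defined via the same fixed-graph data-generation process. I would then argue that the Rademacher-complexity step of~\cite{arora2019fine} only needs the bound on $\|W_t-W_0\|_F\lesssim\sqrt{\mathbf y^\top(H^\infty)^{-1}\mathbf y}$ together with a concentration step. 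That concentration step I would state as being inherited under the paper's tractability assumption that the aggregated pairs behave as independent draws. If this does not go through cleanly, the fallback is to make that independence explicit as part of the instantiation correspondence, which is how the paper's Section~\ref{subsec:graph-instantiation} already frames it.
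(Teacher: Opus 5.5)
Your proposal matches the paper's proof: the paper likewise applies Lemma~\ref{lem:ref-test} directly through the correspondence $n\leftarrow N$, $z_i\leftarrow\tilde X_i$, $\mathbf s\leftarrow\mathbf y$, $H^{\mathrm{ref}}\leftarrow H^\infty$, $\lambda_0^{\mathrm{ref}}\leftarrow\lambda_0$, and then identifies $2\mathbf y^\top(H^\infty)^{-1}\mathbf y/N$ with $\mathrm{GKC}(H^\infty,\mathbf y)$. Your union-bound rescaling of $\delta$ and your explicit flag that the aggregated rows $\tilde X_i$ are not independent are refinements the paper omits; its proof simply takes the independence required by the reference result as given through the instantiation, which is exactly your stated fallback.
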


\begin{proof}
Apply Lemma~\ref{lem:ref-test} to the graph-aggregated inputs
\(\{\tilde X_i\}_{i=1}^N\). Using the correspondence in
Section~\ref{subsec:graph-instantiation}, we obtain
\begin{equation}
L_{\mathcal D_G}(W_t)
\le
\sqrt{
\frac{2\mathbf y^\top (H^\infty)^{-1}\mathbf y}{N}
}
+
O\!\left(
\sqrt{
\frac{\log\frac{N}{\lambda_0\delta}}{N}
}
\right).
\end{equation}
By the definition of GKC in Eq.~\eqref{eq:gkc-appendix},
\[
\frac{2\mathbf y^\top (H^\infty)^{-1}\mathbf y}{N}
=
\mathrm{GKC}(H^\infty,\mathbf y),
\]
which yields Eq.~\eqref{eq:graph-test-bound}.
\end{proof}

\subsection{Proof of the Edge-Specific KC-Score Bound}
\label{subsec:proof-edge-bound}

For an edge \(e_{ij}\), let \(G_{-(i,j)}\) denote the graph after
removing \(e_{ij}\), and let \(H^\infty_{-(i,j)}\) denote the
corresponding graph kernel Gram matrix. The KC score is defined as
\begin{equation}
\label{eq:kc-score-appendix}
KC(i,j)
=
\left|
\mathrm{GKC}(H^\infty_{-(i,j)},\mathbf y)
-
\mathrm{GKC}(H^\infty,\mathbf y)
\right|.
\end{equation}

\begin{corollary}[Edge-specific test error bound]
\label{edge_theo_fo}
Fix \(\delta\in(0,1)\). Under Assumption~\ref{assum-e-2}, let
\[
\kappa
=
O\!\left(\frac{\lambda_0\delta}{N}\right),
\qquad
m
\ge
\kappa^{-2}\cdot
\mathrm{poly}(N,\lambda_0^{-1},\delta^{-1}).
\]
For
\[
t
\ge
\Omega\!\left(
\frac{1}{\eta\lambda_0}
\log\frac{N}{\delta}
\right),
\]
the GNN trained on \(G_{-(i,j)}\) satisfies, with probability at least
\(1-\delta\),
\begin{equation}
\label{eq:edge-bound}
L_{\mathcal D_{G_{-(i,j)}}}(W_t)
\le
\sqrt{\mathrm{GKC}(H^\infty,\mathbf y)}
+
\sqrt{KC(i,j)}
+
O\!\left(
\sqrt{
\frac{\log\frac{N}{\lambda_0\delta}}{N}
}
\right).
\end{equation}
\end{corollary}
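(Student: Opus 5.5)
The plan is to apply Theorem~\ref{test_theo_fo} directly to the edge-deleted graph \(G_{-(i,j)}\), and then relate its GKC term to that of \(G\) using the KC score and the subadditivity of the square root.

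\textbf{Step 1: the bound on the modified graph.} The graph \(G_{-(i,j)}\) is again an undirected graph with self-loops, so it has its own aggregation operator, aggregated inputs and Gram matrix \(H^\infty_{-(i,j)}\). We check that Assumption~\ref{assum-e-2} holds for this instance:
\begin{itemize}
\item The normalized-feature condition is imposed by row normalization, since each row of the aggregated matrix is normalized after aggregation.
\item Bounded labels are unchanged.
\item Item~4, edge-robust non-degeneracy, supplies \(\lambda_{\min}(H^\infty_{-(i,j)})\ge\lambda_0\) with probability at least \(1-\delta\), with the same \(\lambda_0\).
\item Initialization and gradient descent are unchanged.
\end{itemize}
Because \(\lambda_0\) and \(N\) are the same, the stated choices of \(\kappa\), \(m\) and \(t\) are admissible. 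Theorem~\ref{test_theo_fo}, via Lemma~\ref{lem:ref-test} under the correspondence \(H^{\mathrm{ref}}\leftarrow H^\infty_{-(i,j)}\), then gives
\[
L_{\mathcal D_{G_{-(i,j)}}}(W_t)\le \sqrt{\mathrm{GKC}(H^\infty_{-(i,j)},\mathbf y)}+O\!\left(\sqrt{\tfrac{\log (N/\lambda_0\delta)}{N}}\right).
\]
This holds with probability at least \(1-\delta\), after rescaling \(\delta\) by a constant to absorb the union with the non-degeneracy event. The rescaling only changes constants inside the \(O(\cdot)\).

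\textbf{Step 2: compare the two GKC terms.} By the definition in Eq.~\eqref{eq:kc-score-appendix},
\[
\mathrm{GKC}(H^\infty_{-(i,j)},\mathbf y)\le \mathrm{GKC}(H^\infty,\mathbf y)+KC(i,j).
\]
Both GKC values are nonnegative because the Gram matrices are positive definite. For \(a,b\ge0\) we have \(\sqrt{a+b}\le\sqrt a+\sqrt b\), so
\[
\sqrt{\mathrm{GKC}(H^\infty_{-(i,j)},\mathbf y)}\le\sqrt{\mathrm{GKC}(H^\infty,\mathbf y)}+\sqrt{KC(i,j)}.
\]
Substituting this into Step~1 gives Eq.~\eqref{eq:edge-bound}.

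\textbf{Main obstacle.} The delicate point is Step~1, specifically that the reference theory is legitimately instantiated on \(G_{-(i,j)}\). The training nodes, inputs and data distribution all change with the graph, since \(\mathcal D_{G_{-(i,j)}}\) is induced by applying the modified aggregation to samples from \(\mathcal D_G\). The per-node i.i.d.\ structure assumed in Section~\ref{gnn-definition} must be understood as being preserved under this pushforward. I would argue this at the same level of rigor as the instantiation in Section~\ref{subsec:graph-instantiation}: the modified model is again a two-layer ReLU model on normalized inputs with a \(\lambda_0\)-non-degenerate kernel, and the result simply inherits whatever the original Theorem~\ref{test_theo_fo} assumes about the distribution.

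A secondary point is handling the probability events. The non-degeneracy of \(H^\infty_{-(i,j)}\) and the success of gradient descent each hold with probability at least \(1-\delta\); a union bound with \(\delta/2\) each costs only a constant inside the logarithm.
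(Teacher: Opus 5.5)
Your proposal is correct and is essentially the paper's proof. Like the paper, you use the edge-robust non-degeneracy condition to apply Theorem~\ref{test_theo_fo} to \(G_{-(i,j)}\), then combine \(\mathrm{GKC}(H^\infty_{-(i,j)},\mathbf y)\le \mathrm{GKC}(H^\infty,\mathbf y)+KC(i,j)\) with \(\sqrt{b+c}\le\sqrt b+\sqrt c\). Your extra bookkeeping goes beyond what the paper writes: you check each item of Assumption~\ref{assum-e-2} for the edge-deleted graph, note that both GKC values are nonnegative, and take a union bound over the events. Note that your \(\delta/2\) rescaling quietly assumes the non-degeneracy condition can be invoked at a smaller confidence level, a point the paper also leaves implicit.
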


\begin{proof}
By the edge-robust non-degeneracy condition in
Assumption~\ref{assum-e-2}, the edge-deleted graph \(G_{-(i,j)}\)
satisfies
\[
\lambda_{\min}(H^\infty_{-(i,j)})\ge \lambda_0>0.
\]
Therefore, Theorem~\ref{test_theo_fo} applies to \(G_{-(i,j)}\), giving
\begin{equation}
\label{eq:edge-deleted-test}
L_{\mathcal D_{G_{-(i,j)}}}(W_t)
\le
\sqrt{\mathrm{GKC}(H^\infty_{-(i,j)},\mathbf y)}
+
O\!\left(
\sqrt{
\frac{\log\frac{N}{\lambda_0\delta}}{N}
}
\right).
\end{equation}

Let
\[
a=\mathrm{GKC}(H^\infty_{-(i,j)},\mathbf y),
\qquad
b=\mathrm{GKC}(H^\infty,\mathbf y),
\qquad
c=KC(i,j)=|a-b|.
\]
Since \(a\le b+c\), we have
\[
\sqrt a \le \sqrt{b+c} \le \sqrt b+\sqrt c .
\]
Substituting this inequality into Eq.~\eqref{eq:edge-deleted-test}
yields Eq.~\eqref{eq:edge-bound}.
\end{proof}

\newpage

\section*{GenAI Usage Disclosure}
Generative AI tools were used for language polishing, grammar correction, and assistance with code/layout refinement. All AI-assisted text, code, and figures were reviewed, verified, and revised by the authors, who take full responsibility for the final content.

\printbibliography

\end{document}